\def\eqref#1{equation~\ref{#1}}
\def\1{\bm{1}}
\DeclareMathAlphabet{\mathsfit}{\encodingdefault}{\sfdefault}{m}{sl}
\SetMathAlphabet{\mathsfit}{bold}{\encodingdefault}{\sfdefault}{bx}{n}
\newcommand{\method}{MosaCD}
\newtheorem{theorem}{Theorem}[section]
\newtheorem{lemma}[theorem]{Lemma}
\newtheorem{assumption}[theorem]{Assumption}
\newtheorem{corollary}[theorem]{Corollary}
\newtheorem{remark}[theorem]{Remark}
\title{Improving constraint-based discovery with robust propagation and reliable LLM priors}
\author{Ruiqi Lyu$^{\dagger}$ \&
Alistair Turcan$^{\dagger}$ \&
Martin Jinye Zhang$^{\ddagger}$ \&
Bryan Wilder$^{\ddagger}$\\
School of Computer Science\\
Carnegie Mellon University\\
Pittsburgh, PA 15213, USA \\
\texttt{\{ruiqil,aturcan,martinzh,bwilder\}@andrew.cmu.edu} \thanks{$^{\dagger}$ and $^{\ddagger}$ denote equal contribution.} 
}
\begin{document}

\maketitle

\begin{abstract}
Learning causal structure from observational data is central to scientific modeling and decision-making. 
Constraint-based methods aim to recover conditional independence (CI) relations in a causal directed acyclic graph (DAG). 
Classical approaches such as PC and subsequent methods orient v-structures first and then propagate edge directions from these seeds, assuming perfect CI tests and exhaustive search of separating subsets---assumptions often violated in practice, leading to cascading errors in the final graph. 
Recent work has explored using large language models (LLMs) as experts, prompting sets of nodes for edge directions, and could augment edge orientation when assumptions are not met. 
However, such methods implicitly assume perfect experts, which is unrealistic for hallucination-prone LLMs. 
We propose \method{}, a causal discovery method that propagates edges from a high-confidence set of seeds derived from both CI tests and LLM annotations. 
To filter hallucinations, we introduce shuffled queries that exploit LLMs’ positional bias, retaining only high-confidence seeds. 
We then apply a novel confidence-down propagation strategy that orients the most reliable edges first, and can be integrated with any skeleton-based discovery method. 
Across multiple real-world graphs, \method{} achieves higher accuracy in final graph construction than existing constraint-based methods, largely due to the improved reliability of initial seeds and robust propagation strategies. 
\end{abstract}

\section{Introduction}
Causal discovery methods aim to recover a graph describing cause-effect relationships among a set of variables. 
One prominent family---originating with the famed PC algorithm \citep{spirtes2000causation}---are constraint-based methods. 
These methods conduct a series of conditional independence (CI) tests to rule out edges in the graph, then orient the remaining edges. 
Orientation begins with a ``seed'' set of edges determined by v-structures and proceeds by propagation rules that iteratively orient additional edges.
Constraint-based methods are widely-used in practice for their theoretical guarantees, computational efficiency, flexibility across data types, and interpretability of outputs \citep{spirtes2000causation}. 
However, each step is prone to error accumulation. 
In theory, CI tests must perfectly distinguish dependence from independence \citep{spirtes2000causation} across all conditioning subsets \citep{kalisch2007estimating}, yet in practice CI tests with finite samples are noisy and exhaustive subset search is infeasible.
During the edges orientation phase, these errors can be amplified: in order for seed edge directions to be correctly determined, the algorithm must correctly determine conditional dependencies across many different subsets of nodes (e.g., orienting a v-structure requires proving a node never appears in any separating set for a given pair). 
Nevertheless, constraint-based methods remain state of the art, as no better alternative exists for determining an initial seed set of oriented edges. 

Recent advances in LLMs offer new opportunities for seeding edges orientations. 
Despite their imperfections, LLMs contain broad knowledge that can be used to infer pairwise causal relationships \citep{kiciman2023causal, vashishtha2025causal}. 
Existing work has explored prompting LLMs with causal queries (e.g., ``Does $A$ cause $B$?'' \citep{kiciman2023causal, vashishtha2023causal}; ``Is $A$ conditionally independent of $B$ given $C, D, \ldots$?'' \citep{cohrs2024large}), or using LLM information as a causal order prior \citep{vashishtha2025causal} or constraint to be enforced \citep{hasan2023optimizing}. 
However, existing work combining LLMs with causal discovery algorithms keeps the two components entirely separate, either querying the LLM first and feeding the results into an existing algorithm as priors \citep{hasan2023optimizing}, or running a standard algorithm like PC and then using the LLM for post hoc orientation \citep{vashishtha2025causal,khatibi2024alcm}. 
In both case, the discovery algorithm itself remains unchanged.

We study how causal discovery algorithms can be themselves redesigned to take advantage of LLMs as a complementary source of seed information (though our methods are also applicable to other sources of information like human experts). 
Our main contribution is a new causal discovery algorithm, \method{}, which is designed to capitalize on this resource. 
\method{} constructs a high-confidence set of seed edges using both CI test results and LLM annotations. 
Empirically, this set of seeds yields far fewer false positives than existing algorithms, reducing error cascades. 
We further introduce new propagation rules tailored to prioritize orientations supported by more reliable evidence. 
A central element of \method{} is a prompting strategy that mitigates hallucinations.
The key observation is that false positives due to hallucinations or overconfidence are uniquely destructive in the causal discovery process because they cause cascading errors during propagation---the overall algorithm will perform better if low-confidence edges are simply left un-oriented. 
We implement a simple but effective filtering strategy which exploits LLMs' tendency to select the first multiple-choice option when the true answer is unknown. 
To this end, we design shuffled queries that exploit LLMs’ positional bias: orientations are randomized across multiple-choice orderings, and only consistently chosen orientations are retained as seeds.
We evaluate \method{} on 10 causal discovery benchmark datasets of up to 76 nodes and reach new state-of-the-art performance for constraint-based discovery, driven by more reliable seeding and robust propagation.

Our contributions are:
\begin{enumerate}
    \item We propose \method{}, a constraint-based causal discovery algorithm that combines CI-based orientation with robust LLM-based seeding and a confidence-prioritized propagation procedure.
    \item We demonstrate how LLMs can be adapted for robust seeding in causal discovery with minimal hallucination influence.
    \item We evaluate \method{} on 10 real-world datasets and and show strong, consistent performance, particularly in information-heavy domains.
\end{enumerate}

\section{Related Work}
\textbf{Constraint-based causal discovery} 
Constraint-based learning of causal structure involves inferring edges and orientations with CI relations and logical rules. 
The original PC algorithm \citep{spirtes2000causation} infers a causal graph by removing edges via CI tests and orienting the remaining ones with logical rules, assuming no hidden confounders. 
FCI \citep{spirtes2013causal} generalizes PC to allow for latent confounders and selection bias, returning MAPs rather than completed partially directed acyclic
graphs (CPDAGs). 
PC-stable \citep{colombo2014order} addresses PC's variable order dependency by fixing adjacency sets across each conditioning set size. 
To limit the influence of false colliders (from v-structure orientations), Conservative PC (CPC) \citep{ramsey2012adjacency} requires unanimity among observed separating sets to orient an edge, still assuming the separating sets are comprehensive. Post-hoc consistency procedures revisit the CI test results to reconcile a  partially directed acyclic
graph (PDAG) with the skeleton's evidence. 
PC-max \citep{ramsey2016improving} focuses on conditioning sets with the most significant p-values to avoid contradictions. 
These methods still fundamentally rely on v-structure orientation for seeding initial orientations, propagating the rest of the graph from these assumed-correct edges by not creating new v-structures. 
We focus on constraint-based discovery, although we acknowledge score-based methods that aim to learn an entire optimal graph structure such as NOTEARS \citep{zheng2018dagstearscontinuousoptimization}, DAG-GNN \citep{daggnn}, or GES \citep{chickering2002optimal} will have different strengths and weaknesses.

\textbf{Incorporating domain knowledge to causal discovery} Constraint-based methods offer the flexibility of easily adding domain constraints or priors, as opposed to score-based methods where domain knowledge has to be tied into the global objective. Tiered orders and path constraints can prune orientations after PC (or variants), then be closed under orientation rules \citep{meek2013causal}. While scalable, this still inherits the v-structure-first bias as knowledge is applied after initial orientations. Beyond this paradigm, \citet{hyttinen2014constraint} encodes tested (in)dependences together with prior knowledge as logical constraints and minimizes the total weight of violated constraints, which inherits NP-hard worst-case complexity and degrades with dense graphs. \citet{claassen2012bayesian} assigns Bayesian reliabilities to (in)dependence claims and processes them in decreasing reliability, returning a single model with a confidence tag per decision, but it requires enumerating (parts of) the Markov equivalence class, which is also computationally expensive.

\textbf{LLMs for causal discovery} LLMs have been shown to have relevant domain knowledge valuable for causal discovery. However, it is difficult to tell when an LLM is accurately using this information, or does not know the answer and is simply providing a response. Some methods aim to construct a set of constraints (or even the entire graph) using LLMs in a questionnaire style, essentially asking ``Does $A$ cause $B$?'' \citep{kiciman2023causal, vashishtha2023causal}, or ``Is $A$ conditionally independent of $B$ given $\{C, D, \ldots\}$?'' \citep{cohrs2024large}, or using LLM information as a prior selector \citep{vashishtha2025causal, havrilla2025igda} or constraint \citep{takayama2024integrating}. 
However, all existing methods implicitly assume that direct LLM outputs are reliable, without accounting for well-documented issues such as hallucination and positional bias (see below).

\textbf{Hallucination detection in LLMs} LLMs often refuse to acknowledge uncertainty. Given a multiple-choice question, they may just select the first option that is not ``I don't know'' if they are unsure. Empirical studies report positional and presentation biases and a reluctance to admit uncertainty; simple shuffle-and-vote mitigations help but do not absolve the need for further calibration \citep{wang2023large,pezeshkpour2023large}.  Existing work addressing hallucinations typically involves either direct access to the LLM \citep{farquhar2024detecting}, or ability to fine-tune \citep{zhang2024rtuninginstructinglargelanguage}. Cheaper methods that work with prompts only use self-reflection prompting strategies \citep{manakul2023selfcheckgptzeroresourceblackboxhallucination} or LLM-generated confidence scores \citep{zhao2024factandreflectionfarimprovesconfidence}, which requires an LLM to reason about when it is wrong. A cheap, prompt-only method for hallucination filtering that can be applied in scale without needing calibration would benefit causal discovery methods looking to extract priors from an LLM.

\section{Preliminaries}
\textbf{Notations.} 
Let $G=(V,E)$ be the (unknown) ground-truth DAG, where $V$ is the set of observed variables and $E$ is the edge set. 
For disjoint $X,Y,S \subseteq V$, write $X \perp Y \mid S$ if $S$ $d$-separates $X$ and $Y$ in $G$, and $X \not\perp Y \mid S$ otherwise. 
We use $X-Y$ for an undirected edge, $X \to Y$ for a directed edge, and $X \leadsto Y$ for a (semi-)directed path i.e., a path from $X$ to $Y$ in which all arrows, if present, point forward from $X$ toward $Y$).
We use curly braces $\{X,Y,\ldots\}$ to denote an unordered node set.
A triple $X-Z-Y$ is \emph{unshielded} if $X$ and $Y$ are non-adjacent but both are adjacent to $Z$. 
A partially directed acyclic graph (PDAG) is an acyclic graph whose edges may be directed or undirected. 

\textbf{PC-style skeleton search (PC, CPC, PC-stable).} 
The procedure \citep{spirtes2000causation} starts from the complete undirected graph $K_V$ and removes an unordered edge between $X$ and $Y$ whenever a conditional independence (CI) test accepts $X \perp Y \mid S$ for some $S \subseteq V \setminus \{X,Y\}$. 
The resulting undirected graph is $\widehat{Skel}_\Sigma = (V,\hat E)$. 
Along the way, we maintain a minimal sepset record $\Sigma$, where for each nonedge $\{X,Y\} \notin \hat E$, the set $\Sigma(X,Y) = \Sigma(Y, X) \subseteq 2^{\,V \setminus \{X,Y\}}$ collects all conditioning sets $S$ for which $X \perp Y \mid S$ was accepted (e.g., based on a p-value threshold). 
PC and PC-stable typically record only one separating set per nonedge, while CPC records multiple.

\textbf{Intuition of \method{}.}
Given the skeleton, traditional PC algorithms require an initial set of seed orientations to enable further propagation, e.g., via Meek's rules \citep{meek2013causal}. 
Colliders serve as the seeds (also called ``v-structure orientation''): for each unshielded triple $X-Z-Y$, if $Z \notin S$ for all $S \in \Sigma(X,Y)$, then $Z$ is oriented as a collider $X \to Z \leftarrow Y$. 
The intuition is that if $Z$ never appears in a separating set, then the alternative non-collider configurations $X \leftarrow Z \to Y$, $X \to Z \to Y$, $X \leftarrow Z \leftarrow Y$
are ruled out, leaving only the collider. 
In practice, however, CI tests are noisy and statistically asymmetric: a small p-value provides strong evidence for dependence, but a large p-value may reflect limited power rather than genuine independence. 
Thus, identifying a \emph{collider} is less robust than identifying a \emph{non-collider}: if $Z \notin S$ for any $S \in \Sigma(X,Y)$, this absence could be due to low power rather than $Z$ being a collider, whereas if $Z \in S$ for some $S \in \Sigma(X,Y)$, it provides strong evidence that $Z$ is a non-collider. 
Motivated by this observation, we propose (i) replacing collider-based seeding with LLM-based orientation seeding, and (ii) prioritizing identifying non-colliders over colliders. 
This enables orientations that traditional PC algorithms cannot infer: in particular, even when $Z$ is identified as a non-collider, PC alone cannot resolve the orientation among the non-collider configurations without additional seeds. 

\section{Method}
\method{} takes as input a dataset $\mathcal{D}$, the corresponding variables $V$ with names and descriptions, and an LLM, and outputs a fully oriented DAG (Algorithm \ref{alg:method}). 
\method{} consists of 5 steps. 
First, it constructs the undirected skeleton using a constraint-based method (e.g., PC, CPC, PC-stable), yielding $G_{\text{skel}}$ and a minimal sepset record $\Sigma$ with CI p-values.
Second, it uses an LLM to generate a set of high-confidence seed orientations, supplying variable names/descriptions and $\Sigma(X,Y)$, reducing LLM positional bias and hallucination by shuffling the answer order and repeating. 
This is more robust than collider-based seeding, which is sensitive to CI test inaccuracy, limited power and the order of processing.
Third, we propagate orientations iteratively, where for unshielded triples $X-Z-Y$, we prioritize \emph{non-collider} evidence ($Z$ in \emph{all} minimal sepsets of $\Sigma(X,Y)$) over collider evidence ($Z$ in \emph{none} of the minimal sepsets of $\Sigma(X,Y)$), as the latter may instead reflect limited power.
Fourth, \method{} resolves the remaining undirected edges by selecting the orientation that yields the fewest conflicts with $\Sigma$ with ties remaining undirected. 
Fifth, optionally, leftover undirected edges can be oriented using a topological order derived from aggregated LLM votes in Step 2. 

\begin{algorithm}[ht]
\small
\caption{\method{}}
\label{alg:method}
\begin{algorithmic}[1]
\Statex \textbf{Input:} Dataset $\mathcal{D}$ with variables $V$ (names and descriptions)

\State \textbf{Skeleton search:} $(G_{\text{skel}}, \Sigma) \gets \textsc{SkelSearch}(\mathcal{D})$ and initialize PDAG $P \leftarrow G_{\text{skel}}$ \Comment{PC/CPC/PC-stable}

\State \textbf{LLM-based orientation seeding:} Query the LLM to propose high-confidence directions for undirected edges in $P$ using variable names, descriptions, and $\Sigma(X,Y)$ (minimal sepsets and CI p-values)
\Comment{Shuffle answer order and repeat queries to reduce positional bias \& hallucination}

\State \textbf{Repeat until $P$ converges} 
    \State \hspace{1em} \textbf{Repeat until $P$ converges} \Comment{Rule closure}
        \State \hspace{2em} \textbf{Unsupervised propagation (Meek R2, generalized):} If there exists a (semi-)directed path $X \leadsto Y$ in $P$, and $X-Y$, orient $X \to Y$.
        \State \hspace{2em} \textbf{CI-supervised propagation:} Sort unshielded partially ordered triples $X \to Z -Y$ by descending $\max p$ in $\Sigma(X,Y)$;
        orient $Z \to Y$ if $Z$ is in all minimal sepsets of $\Sigma(X,Y)$; orient $Y \to Z$ if $Z$ is in none
        \State \hspace{2em} \textbf{Collider orientation:} Sort unshielded unordered triples $X{-}Z{-}Y$ by descending $\max p$ in $\Sigma(X,Y)$. 
        Orient $X \to Z \leftarrow Y$ if $Z$ is in none of the minimal sepsets in $\Sigma(X,Y)$
    \State \hspace{1em} \textbf{Least-conflict orientation:} 
    For each undirected $X - Y$ (in random order), choose the direction with the fewest conflicts w.r.t. $\Sigma$; leave undirected on ties
\State \textbf{(Optional) Final orientation via votes:} For remaining undirected edges, orient using previous LLM votes
\State \Return $P$
\end{algorithmic}
\end{algorithm}

\textbf{Step 1: Skeleton search.} 
\method{} initializes a skeleton $G_{\text{skel}}$ and sepset record $\Sigma$ from dataset $\mathcal{D}$ using a constraint-based algorithm such as PC, CPC, or PC-stable. 
The PDAG $P$ is initialized as $G_{\text{skel}}$. 
For each conditionally independent pair $\{X,Y\}$, $\Sigma(X,Y)$ contains at least one sepset together with the corresponding CI p-value. Throughout, $\Sigma(X,Y)$ denotes the collection of \emph{minimal} separating sets recorded by the skeleton procedure.

\textbf{Step 2: LLM-based orientation seeding.} 
We generate a set of high-confidence seed orientations for undirected edges by querying an LLM. 
For each undirected edge $X-Y$, we provide the LLM with variable names, variable descriptions, and $\Sigma(X,Y)$ including minimal sepsets and CI p-values. 
To reduce positional bias and hallucination, we randomize the order of candidate answers (e.g., both ``$X\to Y$ or $Y\to X$'' and the reverse) and repeat each query 5 times. 
Edges with consistent answers (being the majority vote in both orders) are retained as initial seeds. We discard any proposed seed $X\to Y$ that (i) contradicts $\Sigma$ at any unshielded triple, or (ii) would create a directed or semi-directed cycle in $P$.

\textbf{Step 3: Iterative orientation propagation.}
\method{} repeats the following steps until convergence. 
\begin{enumerate}
    \renewcommand{\labelenumi}{3.\arabic{enumi}}
    \item \textbf{Unsupervised acyclic propagation:}  If $X-Y$ and $X \leadsto Y$ in $P$, set $X\to Y$.
    \item \textbf{CI-supervised propagation:} 
    For each unshielded and partially ordered triple $X \to Z - Y$, sort in descending order by $\max p$ in $\Sigma(X,Y)$, since larger p-values provide stronger evidence for conditional independence. Orient $Z \to Y$ if $Z$ appears in all saved minimal sepsets of $\Sigma(X,Y)$, and orient $Y \to Z$ if $Z$ appears in none. 
    Prioritizing by $\max p$ ensures that orientations with stronger CI support are applied first. 
    \item \textbf{Collider orientation:} 
    For each unshielded and unordered triple $X - Z - Y$, again sort by descending $\max p$ in $\Sigma(X,Y)$. 
    Orient $X \to Z \leftarrow Y$ if $Z$ appears in none of the minimal sepsets in $\Sigma(X,Y)$. 
\end{enumerate}

\textbf{Step 4: Least-conflict orientation.}
\method{} resolves any remaining undirected edges by choosing the direction that conflicts least with the recorded conditional independences in $\Sigma$. For each undirected pair $X-Y$ (in random order), consider both $X\to Y$ and $Y\to X$. For each option, close the graph under the usual orientation rules and count how many statements in $\Sigma$ would be contradicted; pick the option with the smaller count. If the counts tie, leave $X-Y$ undirected. Steps 3-4 are repeated until nothing changes.

\emph{Example.} After Step 3, suppose $U\to Y$, $V\to Y$, $W\to Y$, $X-Y$ is undirected, and $\Sigma$ contains $X\perp U\mid\{Y\}$, $X\perp V\mid\{Y\}$, $X\perp W$. Then $X\to Y$ opens the colliders $X\to Y\leftarrow W$ and $X\to Y\leftarrow V$ (2 conflicts), whereas $Y\to X$ makes $T\to Y\to X$ induce $X\not\perp W$ (1 conflict); by step 4, we choose $Y\to X$.

\textbf{(Optional) Step 5: Final orientation via votes.} 
If some edges remain undirected, we further use the LLM votes from Step 2 to complete the DAG. 
This is analogous to \citet{vashishtha2025causal}, but our LLM procedure additionally integrates shuffled answer orders to mitigate positional bias.
Votes are aggregated into a weighted directed graph, and the weakest edges (least net support between two directions) are removed to break cycles. 
A topological order is then derived from this weighted digraph, and any remaining undirected edges are oriented according to this order, yielding the final DAG.

\section{Theoretical Analysis}
We start by verifying the correctness of \method{}'s novel propagation strategy, showing that it recovers a PDAG consistent with the true DAG under idealized assumptions similar to those used to prove correctness of existing causal discovery algorithms. Specifically, we assume a perfect CI oracle and a seeding oracle that never returns answers inconsistent with the true graph (though it may abstain from answering). Although unrealistic, these assumptions establish that our propagation rules are correct in the same sense as prior methods: given correct inputs, they recover the unique PDAG consistent with the ground truth. 
We then examine departures from these assumptions, particularly the noisiness in CI tests that motivates \method{}.
To complement our empirical results, we provide a theoretical analysis in a stylized model, demonstrating that orienting non-colliders first (as \method{} does) yields fewer errors than orienting colliders first (as in PC algorithms).  

\subsection{Correctness \label{subsec_main:correctness}}
We show that the orientation procedure in \method{} returns the completed partially directed acyclic graph (CPDAG) of the ground-truth DAG $G$. 
The CPDAG of a DAG $G$ is the unique PDAG representing the Markov equivalence class of $G$: 
(i) it has the same skeleton and v-structures as $G$; 
(ii) a directed edge $X \to Y$ appears in the CPDAG iff it is compelled (i.e., oriented identically in every DAG in the equivalence class); and 
(iii) an undirected edge $X - Y$ appears in the CPDAG iff it is reversible (i.e., can be oriented in either direction within the class) \citep[Theorem 4.1]{andersson1997characterization}. $\Sigma(X,Y)$ stores minimal separators (as produced by PC/PC-stable/CPC under a perfect oracle).

\begin{theorem}\label{thrm:correctness}
    For any distinct nodes $X,Y\in V$ and any conditioning set $S\subseteq V\setminus\{X,Y\}$, assume: 
    (i) \textbf{Causal Markov condition:} if $S$ $d$-separates $X$ and $Y$ in $G$, then $X \perp Y \mid S$ in the distribution;
    (ii) \textbf{Adjacency-Faithfulness:} if $X$ and $Y$ are adjacent in $G$, then $X \not\perp Y \mid S$ for any $S \subseteq V \setminus \{X,Y\}$;  
    (iii) \textbf{Perfect CI oracle:} the CI oracle returns whether $X \perp Y \mid S$ in the distribution induced by $G$ without error;
    (iv) \textbf{Skeleton consistency:} $\widehat{\text{Skel}}_\Sigma = \text{Skel}(G)$; and  
    (v) \textbf{Correct seeds:} the initial seed set $E_{\text{seed}}$ is $\Sigma$-consistent (no arrowhead contradicts $\Sigma$) and acyclic (no directed or semi-directed cycles).  
    Then running Step 3 of \method{} until convergence returns the CPDAG of $G$, and Step 4 performs no additional orientations.  
    Furthermore, when $E_{\text{seed}}=\emptyset$, Step 3 returns the same PDAG as PC, PC-stable, and CPC.
\end{theorem}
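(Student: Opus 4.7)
I would split the statement into soundness of Step~3, completeness of Step~3, inactivity of Step~4, and the reduction to PC/PC-stable/CPC when $E_{\text{seed}}=\emptyset$, and maintain throughout the invariant that every arrowhead in the working PDAG $P$ is compelled in the CPDAG of $G$.

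For soundness I would handle each rule in turn. Seed arrows are compelled by assumption (v). Rule~3.3 is the classical v-structure rule: under a perfect CI oracle with adjacency-faithfulness, in an unshielded triple $X-Z-Y$, $Z$ lies in no minimal sepset of $\{X,Y\}$ iff $X\to Z\leftarrow Y$ in $G$. The non-collider branch of Rule~3.2 is the dual: under assumption (iii) ``$Z$ in all minimal sepsets'' collapses to ``$Z$ in some minimal sepset'', which is equivalent to $Z$ being a non-collider on $X-Z-Y$; combined with the already-oriented $X\to Z$, the only compatible configuration is $X\to Z\to Y$. The collider branch of Rule~3.2 is Rule~3.3 restricted to a partially oriented triple. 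Rule~3.1 is sound by the standard acyclicity argument: orienting $Y\to X$ along an undirected $X-Y$ that lies on a semi-directed path $X\leadsto Y$ containing at least one already-compelled arrow would close a semi-directed cycle, which is impossible in any DAG extending $P$.

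For completeness I would show that MosaCD's rules subsume Meek's R1-R4, so that at the fixed point every compelled edge has been oriented and, by soundness, no others. R1 is exactly the non-collider branch of Rule~3.2; R2 is a direct special case of Rule~3.1; R3 and R4 fall under Rule~3.1 by a short case analysis (in each configuration the premises furnish an undirected-then-directed path $X\leadsto Y$ that qualifies as semi-directed). Combined with the soundness invariant, this yields equality with the CPDAG at convergence. For Step~4, the remaining undirected edges then lie in the chordal components of the CPDAG; each admits both orientations within the Markov equivalence class of $G$, and every DAG in that class induces exactly the CI statements stored in $\Sigma$, so both directions yield zero conflicts, ties occur, and nothing is oriented. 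The empty-seed reduction is immediate: starting from $E_{\text{seed}}=\emptyset$, Step~3 first fires Rule~3.3 (recovering all v-structures) and then closes under rules equivalent to Meek's, reproducing the output of PC, PC-stable, and CPC under a perfect oracle.

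The main obstacle I anticipate is confirming that Rule~3.1 subsumes R3 and R4 without over-orienting: the soundness argument requires reading ``semi-directed path'' as containing at least one directed edge, since otherwise the rule could fire inside an all-undirected chordal component and violate the invariant. With this interpretation fixed, the case analysis is routine, but it is worth stating explicitly alongside the theorem so that the soundness invariant cannot be accidentally broken by a purely undirected path witness.
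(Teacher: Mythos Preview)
Your approach differs from the paper's: you maintain a running invariant that every arrowhead in $P$ is compelled and verify each rule preserves it, whereas the paper argues via two lemmas---a sepset dichotomy (for any unshielded triple, $Z$ lies either in all or in none of the minimal separators) and a confluence claim that the closure of Steps~3.1--3.3 coincides with Meek's rule closure---then invokes Meek's confluence and maximality theorems directly. Notably, the paper's proof treats Step~3.1 as ``Meek R2-type'' (fully directed paths), not the generalized semi-directed version you analyze.

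Two concrete gaps. First, ``seed arrows are compelled by assumption (v)'' misreads the hypothesis: (v) requires only that $E_{\text{seed}}$ be $\Sigma$-consistent and acyclic, which permits a seed to orient a reversible edge, so your invariant can fail at initialization. (The paper's own proof glosses over the same point, so the theorem as stated may over-claim here, but your invariant-based argument is explicitly broken by it.) Second, the semi-directed reading of Rule~3.1 is not sound even with your ``at least one directed edge'' caveat. Take nodes $a,b,c,d$ with skeleton edges $a-b$, $a-c$, $b-c$, $b-d$, with $\{c,d\}$ and $\{a,d\}$ nonadjacent and $b$ a non-collider on the unshielded triple $a-b-d$. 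Step~3.3 orients the v-structure $c\to b\leftarrow d$; on the next inner-loop pass Rule~3.1 sees the semi-directed path $a-c\to b$ and orients $a\to b$ before Step~3.2 can act on $d\to b-a$, yet $b\to a$ is compelled (Meek R1 from $d\to b$ with $a,d$ nonadjacent). So your subsumption of R3/R4 via semi-directed Rule~3.1 purchases completeness at the price of soundness, and the fix you flag does not close the hole. The paper avoids this trade-off by reading Step~3.1 as plain R2 and deferring to Meek's confluence theorem rather than arguing rule-by-rule coverage of R3/R4.
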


See Appendix \ref{appendix:correctness} for the proof. 
\begin{remark}
    In Theorem \ref{thrm:correctness}, conditions (i)-(ii) are standard and ensure that the DAG is consistent with the underlying distribution \citep{spirtes2000causation}.
    (iii)-(v) assume a correct initialization (skeleton, $\Sigma$, and seeds).
    Under these assumptions, \method{}'s orientation propagation is provably correct and coincides with existing PC algorithms in the absence of seeds, justifying its design.     
\end{remark}

\subsection{Prioritizing non-collider over collider identification improves accuracy\label{subsec_main:accuracy}}
For an unshielded triple $X-Z-Y$, traditional PC-style algorithms prioritize identifying colliders by checking whether $Z$ is absent from the separation sets $\Sigma(X,Y)$.
In contrast, our method prioritizes non-colliders by checking whether $Z$
is present in $\Sigma(X,Y)$. To illustrate the difference between these strategies, we analyze a stylized model of the search over conditioning sets, focusing on a setting in which errors from the CI test are independent across queries and the graph is sparse. We find that when CI tests are noisy---incurring both false positives and
false negatives---prioritizing non-colliders yields higher accuracy. 

\paragraph{Level-wise search and error events.}
Let $\ell = |C|$ denote the conditioning-set size. 
The search proceeds by levels $\ell = 0,1,2,\dots$, testing $X \perp Y \mid C$ over all $C \subseteq V \setminus \{X,Y\}$ with $|C|=\ell$. 
At level $\ell$, if any candidate $C$ is accepted as a sepset, the search stops; we declare $Z$ a collider if $Z \notin C$, and a non-collider if $Z \in C$.
Accordingly, a collider error occurs if $Z$ is a non-collider but $Z \notin C$, and a non-collider error occurs if $Z$ is a collider but $Z \in C$. 
We measure their relative frequency via
\begin{align}
    \mathcal{R}_\ell := \frac{\Pr(\text{collider error at level }\ell)}{\Pr(\text{non-collider error at level }\ell)}.
\end{align}
At level $\ell$, PC uses the \textbf{first} accepted sepset $C$ for $(X,Y)$. PC-stable has the same collider/non-collider decision as PC, but with adjacency sets frozen within level $\ell$ (order-invariant), so $\mathcal{R}_\ell$ matches PC. At the first level $\ell$ where independence holds, CPC gathers all minimal sepsets for $(X,Y)$ and orient the collider iff $Z$ is in \textbf{none}, treat as non-collider iff $Z$ is in \textbf{all}, otherwise leave the triple unoriented.

We compute $\mathcal{R}_\ell$ for the PC (the same as that of PC-stable) and CPC rules (denoted $\mathcal{R}_\ell^{\mathrm{PC}}$ and $\mathcal{R}_\ell^{\mathrm{CPC}}$).
PC-stable has the same $\mathcal{R}_\ell$ as PC, since it only removes within-level order dependence. Whenever $\mathcal{R}_\ell > 1$ for a given rule set, tests of non-colliders (prioritized by \method{}) will have a lower error rate than test of colliders (prioritized by existing algorithms).
We make the following assumptions.

\begin{assumption} \label{assumption:SimpleCI}
    (Simple CI test model) 
    Conditional independence (CI) tests act independently across candidates/levels given truth labels, with false positive rate $\alpha$ and false negative rate $\beta$ that do not vary with $\ell$ or $C$. Thus, a true sepset rejects dependence with probability $1-\beta$, while a non-sepset does so with probability $\alpha$. 
\end{assumption}

\begin{assumption}($Z$ controls the $X-Y$ path) \label{assumption:purity}
    (a) All $X-Y$ paths of length at most $2\ell+1$ pass through $Z$; (b) whether the $X-Y$ path is open is determined fully by whether $Z$ is conditioned on. 
\end{assumption}
While these assumptions are deliberately simplified, they are designed to illustrate the core dynamic by isolating the impact of the single node $Z$ (intuitively, Assumption \ref{assumption:purity} describes a locally sparse graph without redundant $X-Y$ paths) and imposing a single set of parameters describing the performance of the CI test.

In this model, we obtain exact analytical expressions for $R_\ell^{\mathrm{PC}}$ and $R_\ell^{\mathrm{CPC}}$, derived and shown in the appendix. These expressions involve a number of combinatorial quantities, but in the asymptotic regime where the error rates of the CI test are small relative to the graph size (a necessary condition for the algorithm to not be overwhelmed with errors), we can further simplify and show that $R_\ell^{\mathrm{PC}}$ and $R_\ell^{\mathrm{CPC}}$ must be strictly above 1.

\begin{theorem} \label{thrm:odds}
    Let $M = |V\setminus\{X,Y\}|$. Suppose that $\alpha, \beta = o\left(\frac{1}{M}\right)$ and $\ell = \Theta(1)$. For $M$ sufficiently large compared to $\ell$, the error ratios satisfy
    \begin{align*}
    &R_\ell^{\mathrm{CPC}} = \beta^{\binom{M-1}{\ell-1}-\binom{M-1}{\ell}}\cdot\frac{M-\ell}{\ell} + o\left(\frac{1}{M}\right) > 1 \\
    &R_\ell^{\mathrm{PC}} = \left(\frac{M}{M - \ell}\right)^2\left(1 - o(1)\right) +  o\left(\frac{1}{M^2}\right) > 1
    \end{align*}
    so that for both algorithms, the error rate among colliders will be higher than noncolliders.
\end{theorem}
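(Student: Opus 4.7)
My plan is to enumerate the candidate conditioning sets at level $\ell$ according to whether they contain $Z$, use the independence of CI outcomes granted by Assumption~\ref{assumption:SimpleCI} to write closed-form expressions for each error event, and then expand in the asymptotic regime $\alpha,\beta = o(1/M)$, $\ell = \Theta(1)$. Concretely I set $N_+ := \binom{M-1}{\ell-1}$ and $N_- := \binom{M-1}{\ell}$, and note the useful identity $N_-/N_+ = (M-\ell)/\ell$. By Assumption~\ref{assumption:purity}(b), the true sepsets among these $\binom{M}{\ell}$ candidates are precisely the type-$+$ sets if $Z$ is a non-collider and the type-$-$ sets if $Z$ is a collider; combined with Assumption~\ref{assumption:SimpleCI}, each candidate's acceptance is an independent Bernoulli of parameter $1-\beta$ (true sepset) or $\alpha$ (otherwise).

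For CPC the analysis is essentially combinatorial. A collider error, conditional on $Z$ being a non-collider, requires all $N_+$ true sepsets to be rejected and at least one of the $N_-$ non-sepsets to be accepted, contributing $\beta^{N_+}[1-(1-\alpha)^{N_-}]$; the symmetric event contributes $\beta^{N_-}[1-(1-\alpha)^{N_+}]$. Taking the ratio gives $R_\ell^{\mathrm{CPC}} = \beta^{N_+-N_-}\cdot \frac{1-(1-\alpha)^{N_-}}{1-(1-\alpha)^{N_+}}$, and I then expand $1-(1-\alpha)^{N_\pm}$ to leading order in $\alpha$, using $\alpha = o(1/M)$ to control the remainder uniformly, yielding $R_\ell^{\mathrm{CPC}} = \beta^{N_+-N_-}\cdot (M-\ell)/\ell + o(1/M)$. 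For $M$ sufficiently large, $N_+ < N_-$, so $\beta < 1$ forces $\beta^{N_+ - N_-} > 1$; combined with $(M-\ell)/\ell > 1$ for $\ell < M/2$, both factors exceed one.

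For PC I would model the enumeration order over the $\binom{M}{\ell}$ candidates as uniformly random, so that by exchangeability the ``first accepted'' candidate is uniform on the accepted set. The two error probabilities then take the form $\mathbb{E}[A_\mp/(A_+ + A_-)]$ conditional on $A_+ + A_- > 0$, where $A_\pm$ are independent binomials whose parameters swap between $(1-\beta,\alpha)$ and $(\alpha,1-\beta)$ with the truth of $Z$. Concentration is the workhorse: in each case the majority-type count is tightly concentrated near its mean while the minority-type count is sparse. A Chernoff bound together with a Taylor expansion of $1/(A_+ + A_-)$ around the dominant term should recover the claimed $(M/(M-\ell))^2(1-o(1)) + o(1/M^2)$, which is strictly above $1$ for $\ell \ge 1$.

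The hard part will be the PC asymptotic: the ratio $\mathbb{E}[A_\mp/(A_+ + A_-)]$ is delicate because the minority-type binomial may have mean only $O(1)$ or smaller, so simply substituting means does not work. I anticipate needing to (i) truncate tail events via Chernoff, (ii) handle the conditioning event $A_+ + A_- > 0$ separately via a union bound, and (iii) perform a leading-order expansion while tracking the additive $o(1/M^2)$ remainder carefully. In contrast the CPC side is routine once $(1-\alpha)^{N_\pm}$ has been expanded with uniform error control in the regime where $\alpha N_\pm$ may not itself be small.
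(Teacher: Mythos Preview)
Your CPC derivation is essentially the paper's: same closed form $\mathcal{R}_\ell^{\mathrm{CPC}}=\beta^{N_+-N_-}\frac{1-(1-\alpha)^{N_-}}{1-(1-\alpha)^{N_+}}$ followed by the same small-$\alpha$ expansion.

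For PC you take a genuinely different route. The paper does \emph{not} use exchangeability plus concentration; instead it computes the order-averaged ``no predecessor hit'' factor analytically via the Beta-integral identity
\[
I_{m,n}(a,b)\;=\;\int_0^1 (1-au)^m(1-bu)^n\,du,
\]
obtaining the exact closed form $\mathcal{R}_\ell^{\mathrm{PC}}=\dfrac{n}{m}\cdot\dfrac{I_{m,n-1}(1-\beta,\alpha)}{I_{n,m-1}(1-\beta,\alpha)}$ with $m=N_+$, $n=N_-$, and then Taylor-expanding $I$ in $\alpha,\beta$. Your reformulation ``first accepted is uniform on the accepted set'' is correct (the random order is independent of the Bernoulli outcomes, so conditional on the accepted set $S$ each element is first with probability $1/|S|$), and it leads to the equivalent representation $\mathbb{E}[A_\mp/(A_++A_-)\mid A_++A_->0]$. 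The trade-off is that the paper's integral yields an exact expression at every stage, so the asymptotic reduces to a one-variable Taylor step with easily tracked remainders; your concentration program can also succeed, but the minority binomial need not concentrate (its mean may tend to $0$ or to $\infty$ depending on the precise rate of $\alpha$ when $\ell\ge 2$), so pinning down the sharp additive remainder from Chernoff truncation is fiddlier than the clean $I_{m,n}$ calculus.

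One caution: carrying out either approach gives leading term $(N_-/N_+)^2=\bigl(\tfrac{M-\ell}{\ell}\bigr)^2$ for PC, which is what the paper's appendix derives; the expression $\bigl(\tfrac{M}{M-\ell}\bigr)^2$ in the theorem statement appears to be a typo, so when your calculation produces $((M-\ell)/\ell)^2$ you should not try to reconcile it with the stated form.
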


See Appendix \ref{appendix:accuracy} for the proof. The intuition is that there are more candidate subsets that do \textit{not} contain $Z$ than subsets that do contain $Z$, so collider-first strategies have more opportunities to make a mistake. \method{} flips the ordering, starting with candidates that do contain $Z$, since this set is smaller and the total number number of mistakes made in early stages will be limited. Numerical experiments (Appendix \ref{subsec:numerical}) also show substantially lower FPRs under the non-collider first strategy across three standard skeleton learners, consistent with the theory.

\section{Experimental results}
We consider 10 benchmark datasets from the BNLearn repository \citep{Scutari2014}: Cancer, Asia, Child, Insurance, Water, Mildew, Alarm, Hailfinder, Hepar2, and Win95pts. 
These datasets range from 5 to 76 nodes and include both real and simulated graphs. 
For simulated datasets, we generate 20,000 samples each.
We measure performance using the F1 score for detecting true edge orientations.

\textbf{Baselines.} 
We consider 3 skeleton search methods: PC \citep{spirtes2000causation}, PC-stable \citep{colombo2014order}, and CPC \citep{ramsey2012adjacency}, which are compatible with all downstream orientation strategies. 
Given a skeleton, we apply 5 baselines: 
(i) \textbf{PC}, which follows the standard procedure of the corresponding skeleton method to orient edges (PC, PC-stable, or CPC);
(ii) \textbf{Meek} \citep{meek2013causal}, which applies Meek's rules after PC to the skeleton to orient remaining edges;
(iii) \textbf{Shapley-PC} \citep{russo2023shapley}, which orients edges using a Shapley-value-based feature importance procedure;
(iv) \textbf{ILS-CSL} \citep{ban2023causal}, an LLM-based method that incorporates statistical knowledge;
-and v) \textbf{SCP} \citep{cohrs2024large}, another LLM-based orientation method.
These baselines represent both state-of-the-art LLM-based and non-LLM methods.
All methods are given access to the dataset, while LLM-based methods are additionally supplied with identical variable names and dataset metadata, and all use the same LLM backbone (GPT-4o-mini). 
Please see more details in Appendix \ref{app:prompt}.

\subsection{Benchmarking experiments}
We applied \method{} and baseline methods to 10 benchmark datasets. 
Results based on PC skeletons are reported in Table \ref{tab:comparison} and those based on PC-Stable and CPC skeletons are in Appendix \ref{app:main}.
We reached 2 main conclusions. 
First, \method{} outperformed all baselines, achieving the best performance in 9 out of 10 datasets;
\method{} similarly outperformed baselines using PC-Stable and CPC skeletons (Appendix \ref{app:main}).
Second, \method{} consistently outperformed other LLM-based methods (ILS-CSL and SCP), suggesting that \method{} makes better utility of available LLM knowledge. We validate the LLM's strong tendency towards positional bias in Appendix \ref{app:bias}.
\begin{table}[htb!]
\centering
\begin{tabular}{lcccccc}
\hline
 & PC & Meek & Shapley-PC & ILS-CSL$^*$ & SCP$^*$ & MosaCD$^*$ \\
\hline
Cancer (5)     & 0.50 & 0.50 & \textbf{1.00} & 0.50 & 0.50 & \textbf{1.00} \\
Asia (8)       & 0.67 & 0.67 & 0.53 & \textbf{0.93} & 0.67 & \textbf{0.93} \\
Child (20)     & 0.70 & 0.78 & 0.67 & \underline{0.83} & 0.78 & \textbf{0.90} \\
Insurance (27) & 0.62 & 0.70 & 0.67 & \underline{0.70} & 0.68 & \textbf{0.87} \\
Water (32)     & 0.45 & 0.57 & 0.47 & \textbf{0.60} & 0.57 & \underline{0.59} \\
Mildew (35)    & 0.63 & 0.69 & 0.75 & \underline{0.89} & 0.69 & \textbf{0.90} \\
Alarm (37)     & 0.85 & \underline{0.90} & 0.84 & 0.85 & 0.87 & \textbf{0.93} \\
Hailfinder (56)& 0.38 & 0.40 & 0.38 & \underline{0.44} & 0.39 & \textbf{0.49} \\
Hepar2 (70)    & 0.36 & 0.39 & 0.44 & \underline{0.54} & 0.38 & \textbf{0.72} \\
Win95pts (76)  & 0.59 & 0.64 & 0.65 & \underline{0.69} & 0.63 & \textbf{0.81} \\
\hline
\end{tabular}
\caption{\textbf{BNLearn evaluation.} F1 score for each dataset and method using the PC skeleton. Number of nodes is provided in the bracket. Best in \textbf{bold}, second-best \underline{underlined}. ``*'' denotes LLM-based methods.
}
\label{tab:comparison}
\end{table}

We further evaluated the effectiveness of \method{}’s LLM-based orientation seeding (Step 2) compared to the standard PC procedure (orienting v-structures). 
Results are reported in Figure \ref{fig:orientation_seed}, averaged across using PC, PC-Stable, and CPC skeletons. 
First, \method{}'s seeding procedure identified substantially more true directions (avg MosaCD vs. PC true seed ratio 1.69) and markedly fewer false directions (avg 4.8\% vs. 26.7\%) than PC across the 10 datasets.
Second, even in datasets with less informative variable descriptions (Hailfinder, Win95pts), \method{} remained robust: it detected relatively few true directions but did not introduce excessive false seeds compared to PC (average 16.3\% vs. 21.7\% across such datasets), likely due to the hallucination filtering procedure. 
Full results are reported in Appendix \ref{app:orient}.

\begin{figure}[htb!]
    \centering
    \includegraphics[width=0.7\linewidth]{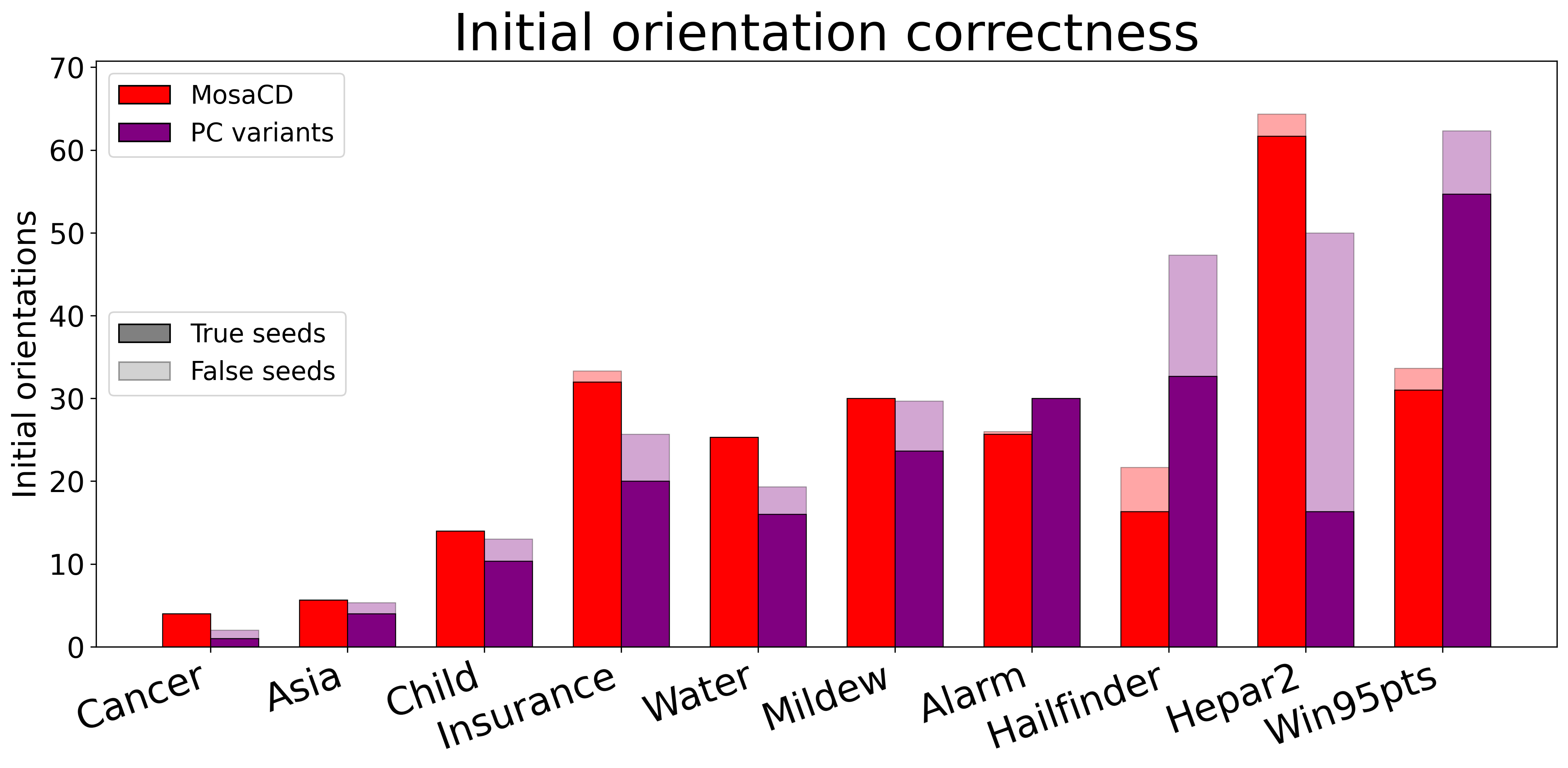}
    \caption{\textbf{Accuracy of orientation seeds.} Number of true and false directions discovered by \method{} LLM-based orientation seeding (Step 2) and the standard PC procedure (orienting v-structures). Results are reported for in each dataset averaged across using PC, PC-stable, and CPC skeletons.}
    \label{fig:orientation_seed}
\end{figure}

\subsection{Secondary analyses and ablation studies}
First, we assessed the robustness of \method{}’s LLM inference by replacing a proportion of variable descriptions in the ``insurance'' dataset with uninformative ones. 
Results are reported in Figure \ref{fig:null}.
While \method{}’s performance and the number of true orientation seeds declined expectedly as the proportion of uninformative variables increased, it consistently generated only a small number of false seeds and remained competitive or superior to the baseline, demonstrating robustness.

\begin{figure}[ht]
    \centering
    \includegraphics[width=0.9\linewidth]{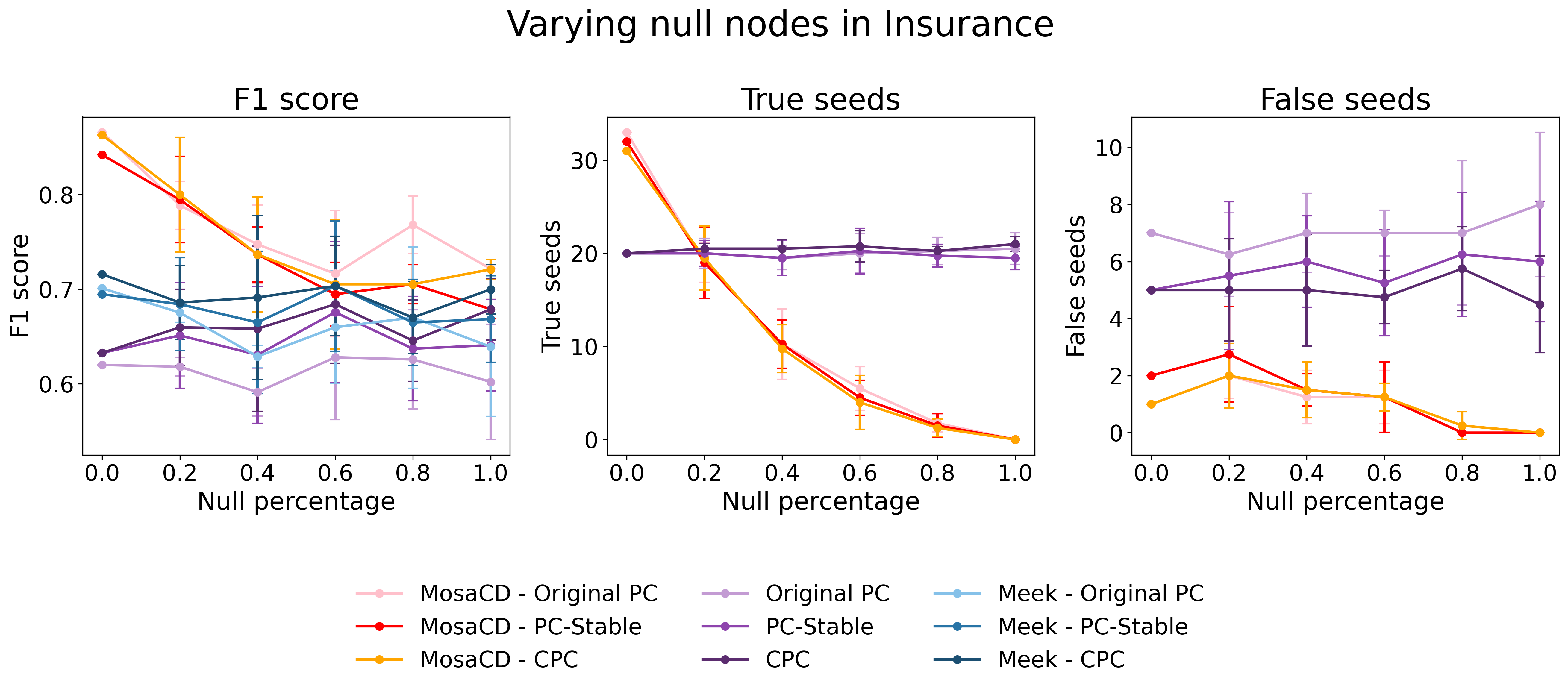}
    \caption{\textbf{Experiments with a proportion of uninformative variable descriptions.} 
    F1 score, number of true seeds, and number of false seeds for \method{}, PC, and Meek as the proportion of uninformative variable descriptions varies. Results are reported for using PC, PC-Stable, and CPC skeletons 
    }
    \label{fig:null}
\end{figure}

Second, we evaluated the robustness of \method{}’s propagation procedures (Steps 3-5) through ablation studies on the ``Insurance'' dataset. Specifically, we removed the LLM seeding step (Step 2), varied the number of true seeds, and adjusted the proportion of false seeds, comparing against Meek's propagation rules.
Results are reported in Figure \ref{fig:truth}.
\method{} consistently outperformed Meek both when varying the number of true seeds (with false seeds fixed at 0) and when varying the proportion of false seeds (with total seeds fixed at 20), demonstrating its effectiveness. We repeat this analysis in the Asia and Hepar2 datasets and achieve similar results (Appendix \ref{app:propagation}).

\begin{figure}[ht]
    \centering
    \includegraphics[width=0.75\linewidth]{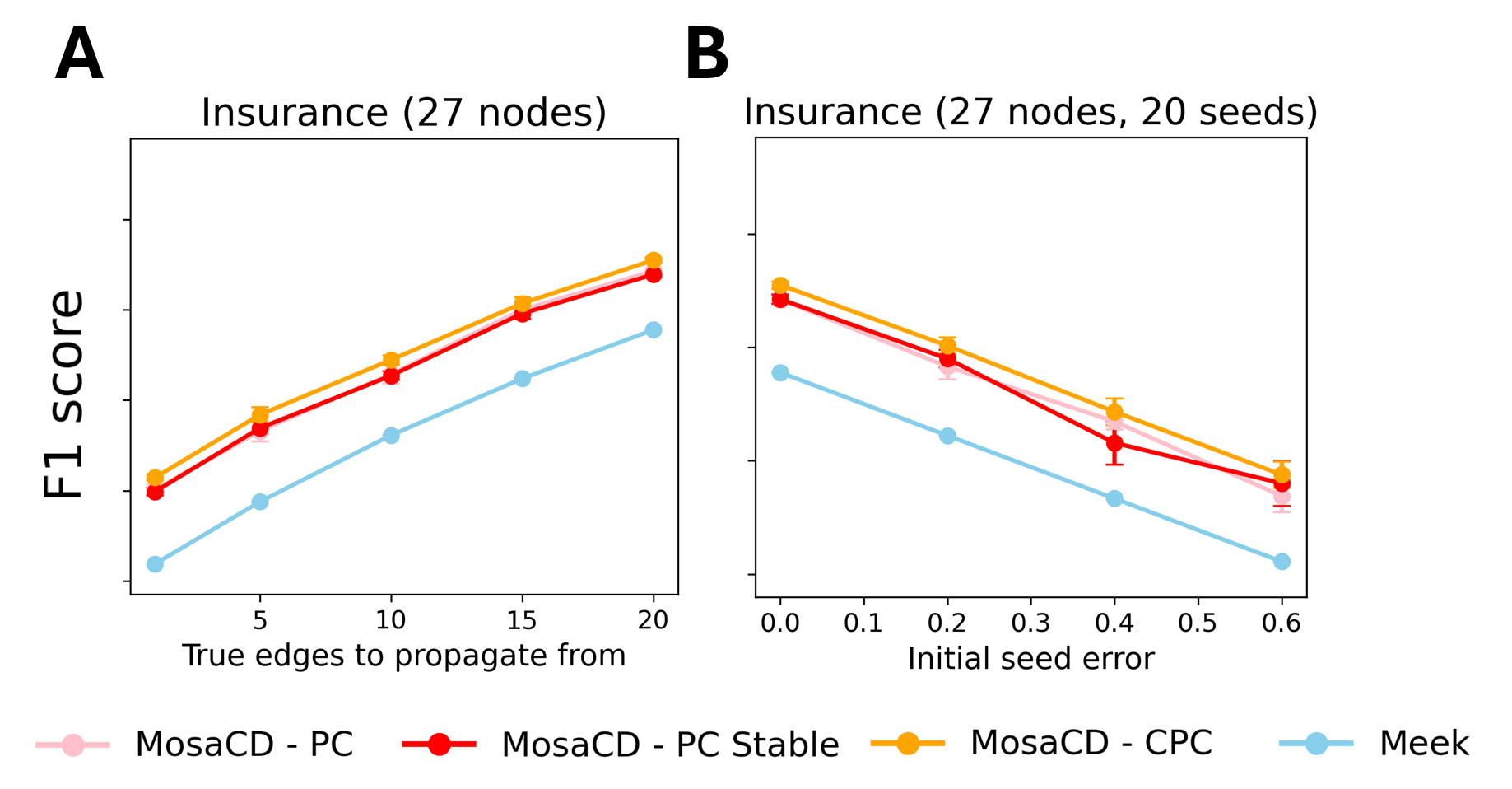}
    \caption{\textbf{Experiments varying number of true and false seeds.}
    F1 score for \method{} (using PC, PC-Stable, and CPC skeletons) and Meek.
    \textbf{(A)} Varying the number of true seeds with false seeds fixed at 0.
    \textbf{(B)} Varying the proportion of false seeds with total seeds fixed at 20).
    }
    \label{fig:truth}
\end{figure}

Third, we varied the LLM backbone used in \method{}, from efficient models (Claude-3.5-Haiku, GPT-4o-mini) to frontier reasoning models (GPT-5, Claude-Sonnet-4), as well as the open-source GPT-oss-120b, analyzing 3 representative datasets (Asia, Insurance, Hepar2).
Results are reported in Figure \ref{fig:llm}. 
\method{} maintained consistent performance across backbones, with the exception of GPT-oss-120b, which exhibited slightly weaker results.

\begin{figure}[ht]
    \centering
    \includegraphics[width=1\linewidth]{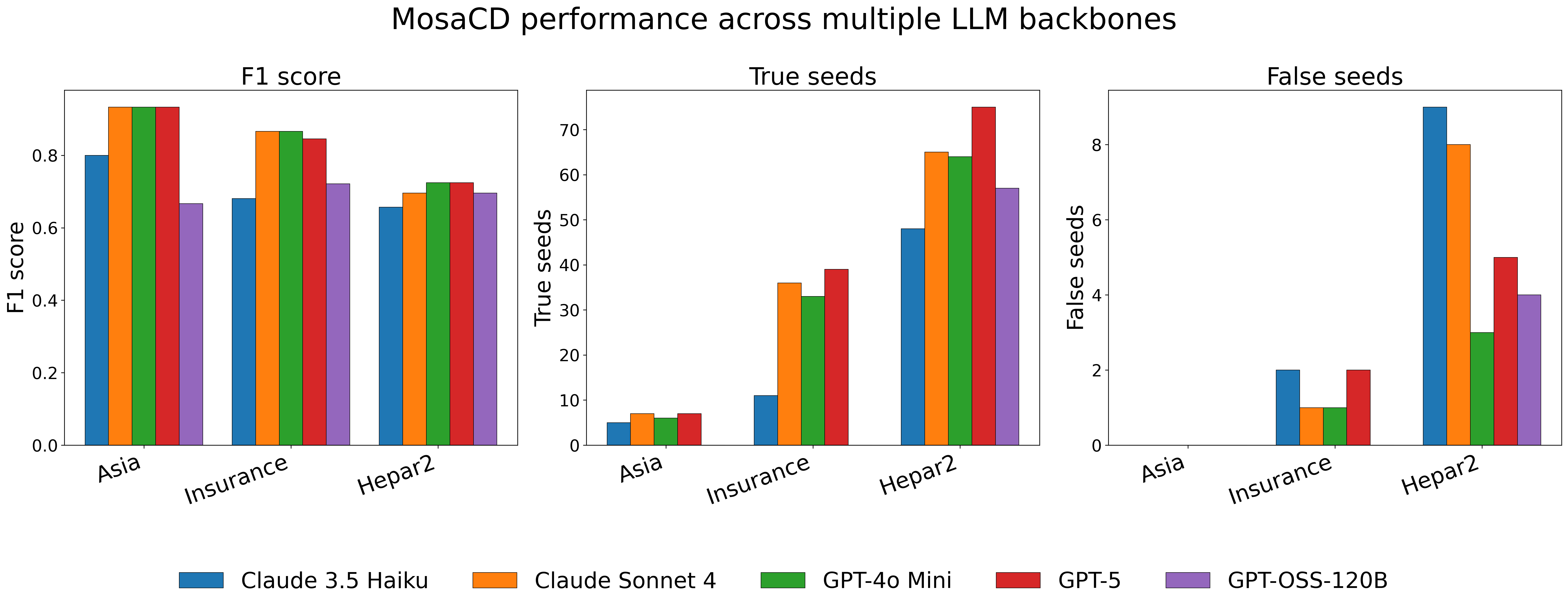}
    \caption{\textbf{\method{} performance across LLMs.} 
    F1 scores on the Asia, Insurance, and Hepar2 datasets across five different LLMs (using original PC skeletons).
    }
    \label{fig:llm}
\end{figure}

\clearpage
\bibliography{iclr2026_conference}

\begin{thebibliography}{30}
\providecommand{\natexlab}[1]{#1}
\providecommand{\url}[1]{\texttt{#1}}
\expandafter\ifx\csname urlstyle\endcsname\relax
  \providecommand{\doi}[1]{doi: #1}\else
  \providecommand{\doi}{doi: \begingroup \urlstyle{rm}\Url}\fi

\bibitem[Andersson et~al.(1997)Andersson, Madigan, and Perlman]{andersson1997characterization}
Steen~A Andersson, David Madigan, and Michael~D Perlman.
\newblock A characterization of markov equivalence classes for acyclic digraphs.
\newblock \emph{The Annals of Statistics}, 25\penalty0 (2):\penalty0 505--541, 1997.

\bibitem[Ban et~al.(2023)Ban, Chen, Lyu, Wang, and Chen]{ban2023causal}
Taiyu Ban, Lyuzhou Chen, Derui Lyu, Xiangyu Wang, and Huanhuan Chen.
\newblock Causal structure learning supervised by large language model.
\newblock \emph{arXiv preprint arXiv:2311.11689}, 2023.

\bibitem[Chickering(2002)]{chickering2002optimal}
David~Maxwell Chickering.
\newblock Optimal structure identification with greedy search.
\newblock \emph{Journal of machine learning research}, 3\penalty0 (Nov):\penalty0 507--554, 2002.

\bibitem[Claassen \& Heskes(2012)Claassen and Heskes]{claassen2012bayesian}
Tom Claassen and Tom Heskes.
\newblock A bayesian approach to constraint based causal inference.
\newblock \emph{arXiv preprint arXiv:1210.4866}, 2012.

\bibitem[Cohrs et~al.(2024)Cohrs, Varando, Diaz, Sitokonstantinou, and Camps-Valls]{cohrs2024large}
Kai-Hendrik Cohrs, Gherardo Varando, Emiliano Diaz, Vasileios Sitokonstantinou, and Gustau Camps-Valls.
\newblock Large language models for constrained-based causal discovery.
\newblock \emph{arXiv preprint arXiv:2406.07378}, 2024.

\bibitem[Colombo et~al.(2014)Colombo, Maathuis, et~al.]{colombo2014order}
Diego Colombo, Marloes~H Maathuis, et~al.
\newblock Order-independent constraint-based causal structure learning.
\newblock \emph{J. Mach. Learn. Res.}, 15\penalty0 (1):\penalty0 3741--3782, 2014.

\bibitem[Farquhar et~al.(2024)Farquhar, Kossen, Kuhn, et~al.]{farquhar2024detecting}
Sebastian Farquhar, Jannik Kossen, Lukas Kuhn, et~al.
\newblock Detecting hallucinations in large language models using semantic entropy.
\newblock \emph{Nature}, 630\penalty0 (8017):\penalty0 625--630, 2024.
\newblock \doi{10.1038/s41586-024-07421-0}.
\newblock URL \url{https://doi.org/10.1038/s41586-024-07421-0}.

\bibitem[Hasan \& Gani(2023)Hasan and Gani]{hasan2023optimizing}
Uzma Hasan and Md~Osman Gani.
\newblock Optimizing data-driven causal discovery using knowledge-guided search.
\newblock \emph{arXiv preprint arXiv:2304.05493}, 2023.

\bibitem[Havrilla et~al.(2025)Havrilla, Alvarez-Melis, and Fusi]{havrilla2025igda}
Alex Havrilla, David Alvarez-Melis, and Nicolo Fusi.
\newblock Igda: Interactive graph discovery through large language model agents.
\newblock \emph{arXiv preprint arXiv:2502.17189}, 2025.

\bibitem[Hyttinen et~al.(2014)Hyttinen, Eberhardt, and J{\"a}rvisalo]{hyttinen2014constraint}
Antti Hyttinen, Frederick Eberhardt, and Matti J{\"a}rvisalo.
\newblock Constraint-based causal discovery: Conflict resolution with answer set programming.
\newblock In \emph{UAI}, pp.\  340--349, 2014.

\bibitem[Kalisch \& B{\"u}hlman(2007)Kalisch and B{\"u}hlman]{kalisch2007estimating}
Markus Kalisch and Peter B{\"u}hlman.
\newblock Estimating high-dimensional directed acyclic graphs with the pc-algorithm.
\newblock \emph{Journal of Machine Learning Research}, 8\penalty0 (3), 2007.

\bibitem[Khatibi et~al.(2024)Khatibi, Abbasian, Yang, Azimi, and Rahmani]{khatibi2024alcm}
Elahe Khatibi, Mahyar Abbasian, Zhongqi Yang, Iman Azimi, and Amir~M Rahmani.
\newblock Alcm: Autonomous llm-augmented causal discovery framework.
\newblock \emph{arXiv preprint arXiv:2405.01744}, 2024.

\bibitem[Kiciman et~al.(2023)Kiciman, Ness, Sharma, and Tan]{kiciman2023causal}
Emre Kiciman, Robert Ness, Amit Sharma, and Chenhao Tan.
\newblock Causal reasoning and large language models: Opening a new frontier for causality.
\newblock \emph{Transactions on Machine Learning Research}, 2023.

\bibitem[Manakul et~al.(2023)Manakul, Liusie, and Gales]{manakul2023selfcheckgptzeroresourceblackboxhallucination}
Potsawee Manakul, Adian Liusie, and Mark J.~F. Gales.
\newblock Selfcheckgpt: Zero-resource black-box hallucination detection for generative large language models, 2023.
\newblock URL \url{https://arxiv.org/abs/2303.08896}.

\bibitem[Meek(2013)]{meek2013causal}
Christopher Meek.
\newblock Causal inference and causal explanation with background knowledge.
\newblock \emph{arXiv preprint arXiv:1302.4972}, 2013.

\bibitem[Pezeshkpour \& Hruschka(2023)Pezeshkpour and Hruschka]{pezeshkpour2023large}
Pouya Pezeshkpour and Estevam Hruschka.
\newblock Large language models sensitivity to the order of options in multiple-choice questions.
\newblock \emph{arXiv preprint arXiv:2308.11483}, 2023.

\bibitem[Ramsey(2016)]{ramsey2016improving}
Joseph Ramsey.
\newblock Improving accuracy and scalability of the pc algorithm by maximizing p-value.
\newblock \emph{arXiv preprint arXiv:1610.00378}, 2016.

\bibitem[Ramsey et~al.(2012)Ramsey, Zhang, and Spirtes]{ramsey2012adjacency}
Joseph Ramsey, Jiji Zhang, and Peter~L Spirtes.
\newblock Adjacency-faithfulness and conservative causal inference.
\newblock \emph{arXiv preprint arXiv:1206.6843}, 2012.

\bibitem[Russo \& Toni(2023)Russo and Toni]{russo2023shapley}
Fabrizio Russo and Francesca Toni.
\newblock Shapley-pc: Constraint-based causal structure learning with shapley values.
\newblock \emph{arXiv preprint arXiv:2312.11582}, 2023.

\bibitem[Scutari \& Denis(2014)Scutari and Denis]{Scutari2014}
Marco Scutari and Jean-Baptiste Denis.
\newblock \emph{Bayesian Networks: With Examples in R}.
\newblock Chapman \& Hall/CRC Texts in Statistical Science, Taylor \& Francis, 2014.
\newblock ISBN 978-1-4665-7378-5.

\bibitem[Spirtes et~al.(2000)Spirtes, Glymour, and Scheines]{spirtes2000causation}
Peter Spirtes, Clark~N Glymour, and Richard Scheines.
\newblock \emph{Causation, prediction, and search}.
\newblock MIT press, 2000.

\bibitem[Spirtes et~al.(2013)Spirtes, Meek, and Richardson]{spirtes2013causal}
Peter~L Spirtes, Christopher Meek, and Thomas~S Richardson.
\newblock Causal inference in the presence of latent variables and selection bias.
\newblock \emph{arXiv preprint arXiv:1302.4983}, 2013.

\bibitem[Takayama et~al.(2024)Takayama, Okuda, Pham, Ikenoue, Fukuma, Shimizu, and Sannai]{takayama2024integrating}
Masayuki Takayama, Tadahisa Okuda, Thong Pham, Tatsuyoshi Ikenoue, Shingo Fukuma, Shohei Shimizu, and Akiyoshi Sannai.
\newblock Integrating large language models in causal discovery: A statistical causal approach.
\newblock \emph{arXiv preprint arXiv:2402.01454}, 2024.

\bibitem[Vashishtha et~al.(2023)Vashishtha, Reddy, Kumar, Bachu, Balasubramanian, and Sharma]{vashishtha2023causal}
Aniket Vashishtha, Abbavaram~Gowtham Reddy, Abhinav Kumar, Saketh Bachu, Vineeth~N Balasubramanian, and Amit Sharma.
\newblock Causal inference using llm-guided discovery.
\newblock \emph{arXiv preprint arXiv:2310.15117}, 2023.

\bibitem[Vashishtha et~al.(2025)Vashishtha, Abbavaram, Kumar, Bachu, Balasubramanian, and Sharma]{vashishtha2025causal}
Aniket Vashishtha, Gowtham~Reddy Abbavaram, Abhinav Kumar, Saketh Bachu, Vineeth~N Balasubramanian, and Amit Sharma.
\newblock Causal order: The key to leveraging imperfect experts in causal inference.
\newblock In \emph{The Thirteenth International Conference on Learning Representations}, 2025.

\bibitem[Wang et~al.(2023)Wang, Li, Chen, Cai, Zhu, Lin, Cao, Liu, Liu, and Sui]{wang2023large}
Peiyi Wang, Lei Li, Liang Chen, Zefan Cai, Dawei Zhu, Binghuai Lin, Yunbo Cao, Qi~Liu, Tianyu Liu, and Zhifang Sui.
\newblock Large language models are not fair evaluators.
\newblock \emph{arXiv preprint arXiv:2305.17926}, 2023.

\bibitem[Yu et~al.(2019)Yu, Chen, Gao, and Yu]{daggnn}
Yue Yu, Jie Chen, Tian Gao, and Mo~Yu.
\newblock {DAG}-{GNN}: {DAG} structure learning with graph neural networks.
\newblock In Kamalika Chaudhuri and Ruslan Salakhutdinov (eds.), \emph{Proceedings of the 36th International Conference on Machine Learning}, volume~97 of \emph{Proceedings of Machine Learning Research}, pp.\  7154--7163. PMLR, 09--15 Jun 2019.
\newblock URL \url{https://proceedings.mlr.press/v97/yu19a.html}.

\bibitem[Zhang et~al.(2024)Zhang, Diao, Lin, Fung, Lian, Wang, Chen, Ji, and Zhang]{zhang2024rtuninginstructinglargelanguage}
Hanning Zhang, Shizhe Diao, Yong Lin, Yi~R. Fung, Qing Lian, Xingyao Wang, Yangyi Chen, Heng Ji, and Tong Zhang.
\newblock R-tuning: Instructing large language models to say `i don't know', 2024.
\newblock URL \url{https://arxiv.org/abs/2311.09677}.

\bibitem[Zhao et~al.(2024)Zhao, Zhang, Pan, Yao, Yu, Wu, and Chen]{zhao2024factandreflectionfarimprovesconfidence}
Xinran Zhao, Hongming Zhang, Xiaoman Pan, Wenlin Yao, Dong Yu, Tongshuang Wu, and Jianshu Chen.
\newblock Fact-and-reflection (far) improves confidence calibration of large language models, 2024.
\newblock URL \url{https://arxiv.org/abs/2402.17124}.

\bibitem[Zheng et~al.(2018)Zheng, Aragam, Ravikumar, and Xing]{zheng2018dagstearscontinuousoptimization}
Xun Zheng, Bryon Aragam, Pradeep Ravikumar, and Eric~P. Xing.
\newblock Dags with no tears: Continuous optimization for structure learning, 2018.
\newblock URL \url{https://arxiv.org/abs/1803.01422}.

\end{thebibliography}
\bibliographystyle{iclr2026_conference}

\appendix

\appendix
\newpage
\section{Results with different PC skeletons}
\label{app:main}

\begin{table}[H]
\centering
\begin{tabular}{lccccc}
\hline
 & PC & Meek & Shapley-PC & SCP & MosaCD \\
\hline
Cancer (5 nodes)      & 0.50 & 0.50 & 1.00 & 0.50 & 1.00 \\
Asia (8 nodes)        & 0.75 & 0.93 & 0.53 & 0.93 & 0.93 \\
Child (20 nodes)      & 0.90 & 0.90 & 0.67 & 0.90 & 0.90 \\
Insurance (27 nodes)  & 0.65 & 0.74 & 0.73 & 0.72 & 0.86 \\
Water (32 nodes)      & 0.47 & 0.57 & 0.47 & 0.59 & 0.63 \\
Mildew (35 nodes)     & 0.64 & 0.71 & 0.74 & 0.71 & 0.87 \\
Alarm (37 nodes)      & 0.85 & 0.90 & 0.84 & 0.87 & 0.93 \\
Hailfinder (56 nodes) & 0.42 & 0.43 & --   & 0.44 & 0.47 \\
Hepar2 (70 nodes)     & 0.42 & 0.44 & 0.45 & 0.43 & 0.72 \\
Win95pts (76 nodes)   & 0.64 & 0.69 & 0.66 & 0.70 & 0.80 \\
\hline
\end{tabular}
\caption{\textbf{BNLearn evaluation (CPC).} F1 score using CPC's skeleton. ``--'' indicates method timed out after 12 hours.}
\label{tab:pcstable_table}
\end{table}

\begin{table}[H]
\centering
\begin{tabular}{lccccc}
\hline
 & PC & Meek & Shapley-PC & SCP & MosaCD \\
\hline
Cancer (5 nodes)      & 0.50 & 0.50 & 1.00 & 0.50 & 1.00 \\
Asia (8 nodes)        & 0.67 & 0.67 & 0.53 & 0.67 & 0.93 \\
Child (20 nodes)      & 0.70 & 0.78 & 0.67 & 0.78 & 0.86 \\
Insurance (27 nodes)  & 0.65 & 0.72 & 0.73 & 0.69 & 0.84 \\
Water (32 nodes)      & 0.48 & 0.58 & 0.47 & 0.59 & 0.63 \\
Mildew (35 nodes)     & 0.69 & 0.73 & 0.74 & 0.75 & 0.87 \\
Alarm (37 nodes)      & 0.85 & 0.90 & 0.84 & 0.87 & 0.96 \\
Hailfinder (56 nodes) & 0.39 & 0.41 & --   & 0.39 & 0.57 \\
Hepar2 (70 nodes)     & 0.40 & 0.43 & 0.47 & 0.42 & 0.71 \\
Win95pts (76 nodes)   & 0.64 & 0.69 & 0.66 & 0.69 & 0.73 \\
\hline
\end{tabular}
\caption{\textbf{BNLearn evaluation (PC-Stable).} F1 score using PC-Stable's skeleton. ``--'' indicates indicates method timed out after 12 hours.}
\label{tab:cpc_table}
\end{table}

\newpage
\section{Orientation correctness by skeleton}
\label{app:orient}

\begin{figure}[H]
    \centering
    \includegraphics[width=0.75\linewidth]{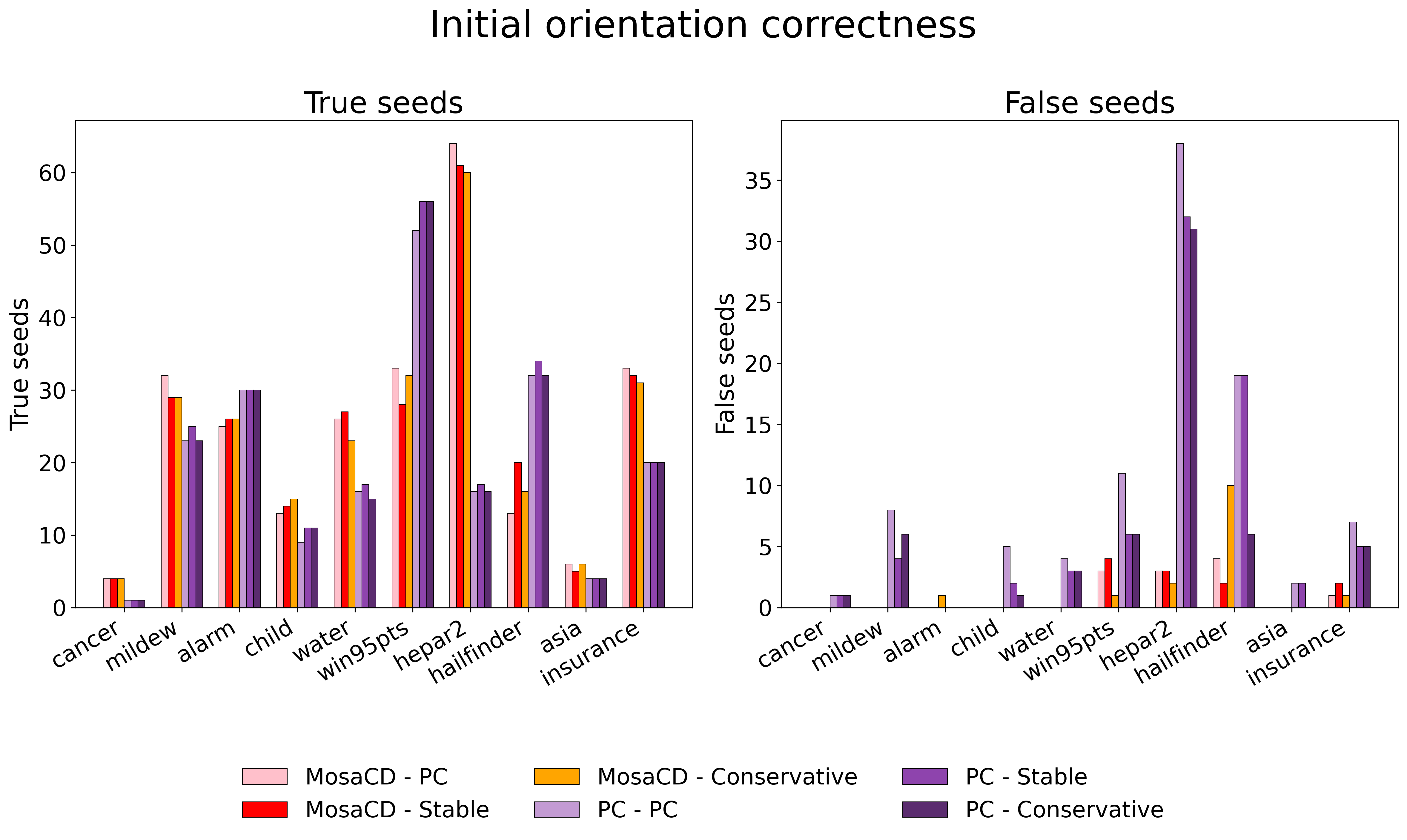}
    \caption{\textbf{Initial orientation correctness.} The average number of true and false seeds discovered by \method{} and PC variants in each dataset, aggregated across PC, PC-stable, and CPC.}
\end{figure}

\newpage
\section{{Positional bias}}
\label{app:bias}

\begin{figure}[H]
    \centering
    \includegraphics[width=1.0\linewidth]{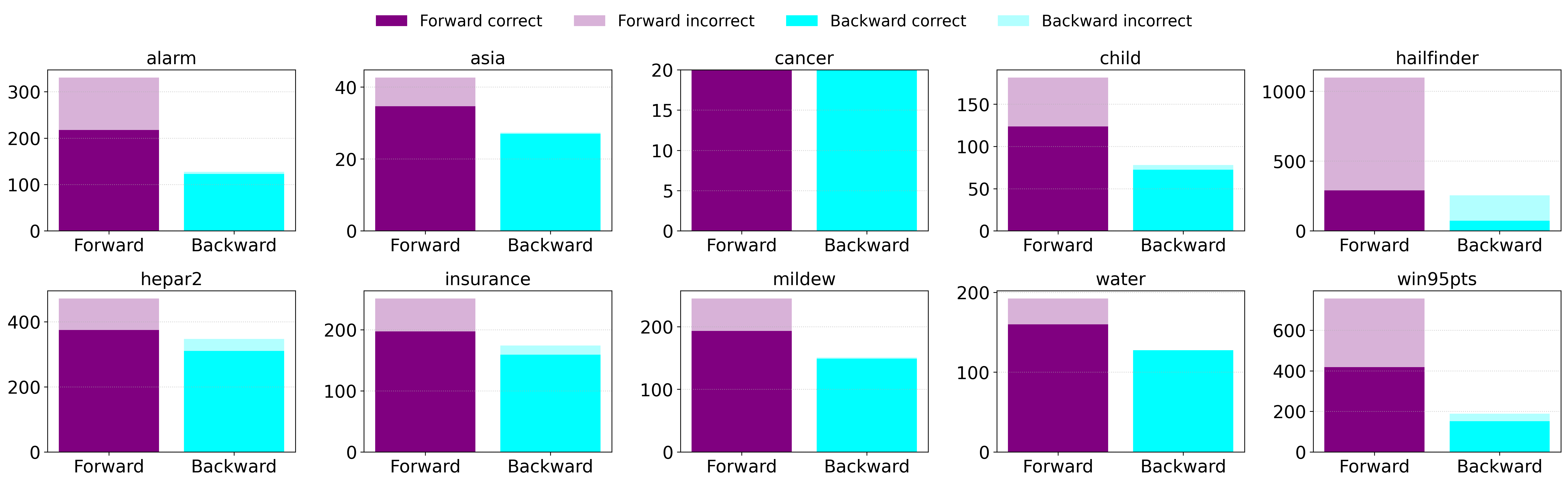}
    \caption{\textbf{Average count of total votes in each direction for a given prompt.} Forward appears before backward, and thus is chosen more frequently by the LLM. Shaded bars indicate votes in the wrong direction. Of note, I don't know was the first option, which was never selected.}
\end{figure}

\newpage
\section{Varying sample size}
\label{app:sample}

\begin{figure}[H]
    \centering
    \includegraphics[width=1\linewidth]{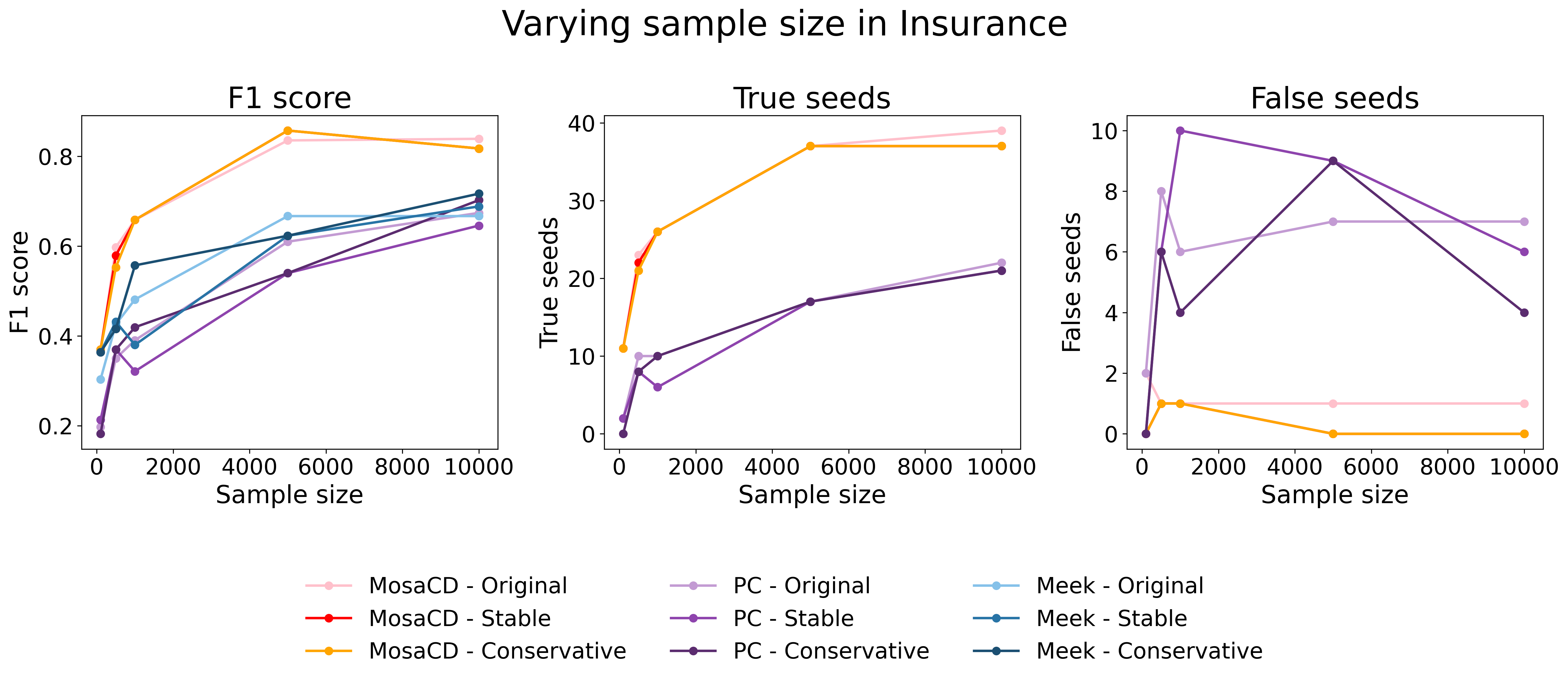}
    \caption{\textbf{Varying sample size.} Results showing F1 score, true seed counts, and false seed counts for MosaCD, PC, and Meek's rules for sample size from 100 to 10000.}
    \label{fig:sample_size}
\end{figure}

\newpage

\section{Propagation ablations}
\label{app:propagation}

\begin{figure}[H]
    \centering
    \includegraphics[width=0.75\linewidth]{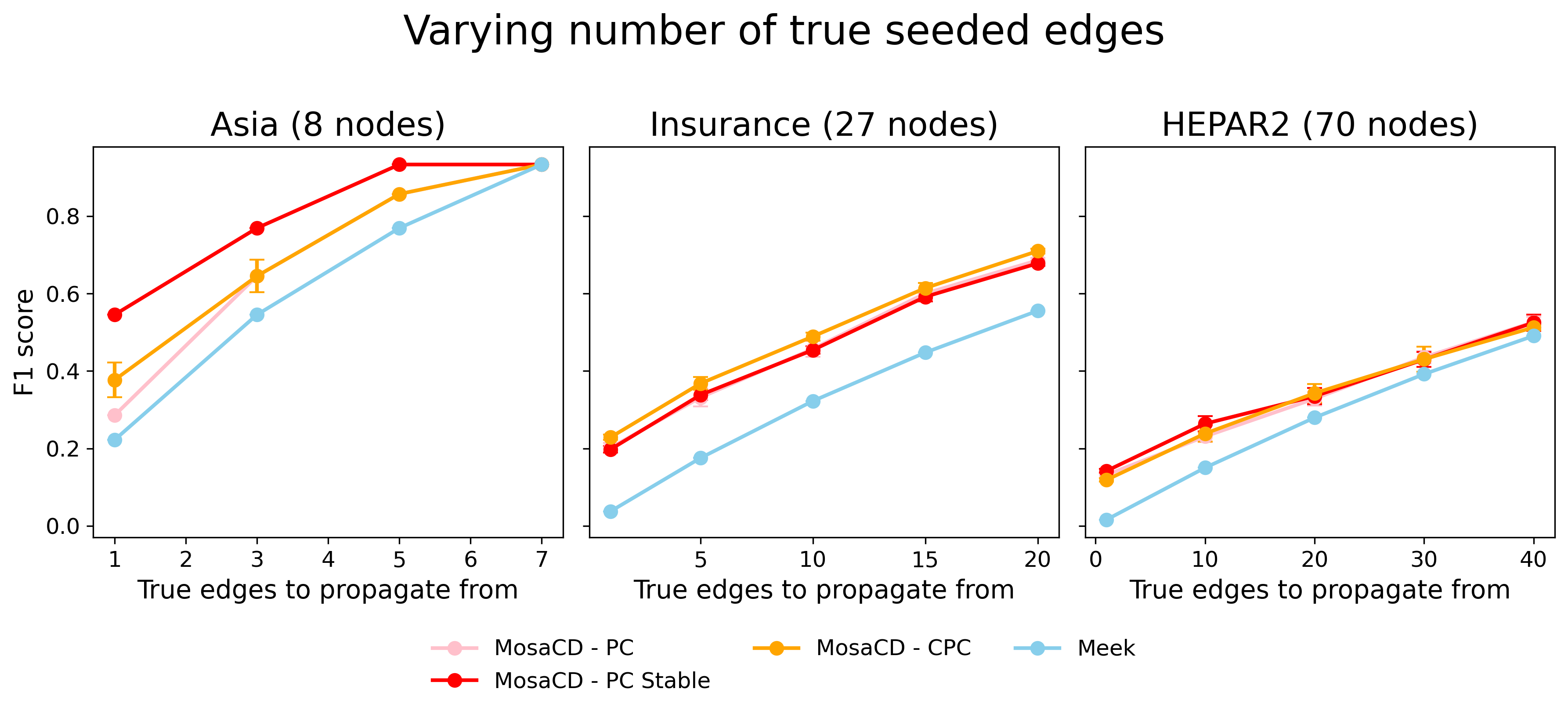}
    \caption{\textbf{Experiments varying number of true seeds.} F1 score for MosaCD (using PC,
PC-Stable, and CPC skeletons) and Meek.}
\end{figure}
\begin{figure}[H]
    \centering
    \includegraphics[width=0.75\linewidth]{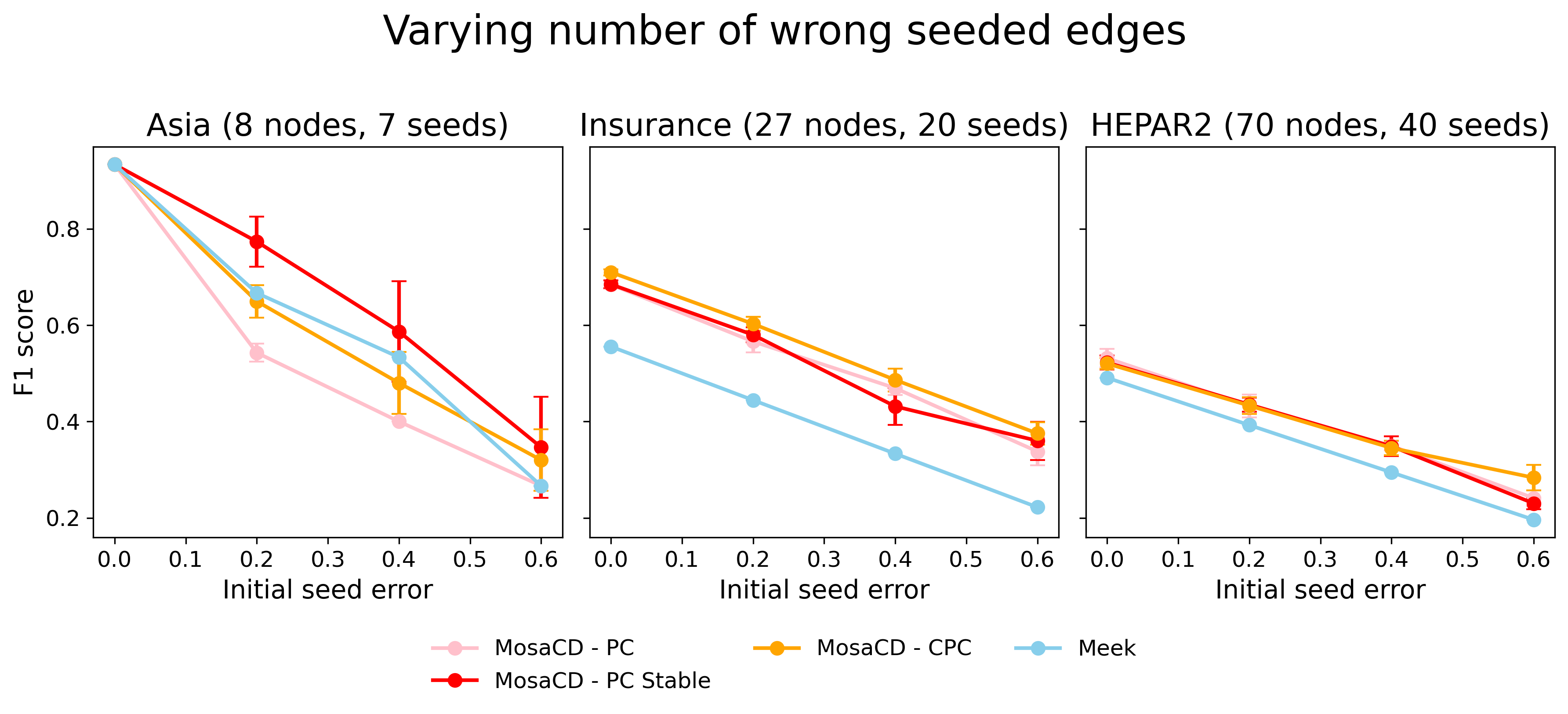}
    \caption{\textbf{Experiments varying number of false seeds.}F1 score for MosaCD (using PC,
PC-Stable, and CPC skeletons) and Meek.}
\end{figure}

\newpage

\section{Proofs for \ref{subsec_main:correctness}\label{appendix:correctness}}
\begin{lemma}[Collider/non-collider soundness w.r.t. $\Sigma$]\label{correctness:lemma1}
Consider an unshielded triple $X-Z-Y$ with $X$ nonadjacent to $Y$. Under a perfect CI oracle, exactly one of the following holds:
\begin{enumerate}
\item $Z\notin S$ for all $S\in\Sigma(X,Y)$, which compels $X\to Z\leftarrow Y$;
\item $Z\in S$ for all $S\in\Sigma(X,Y)$, which forbids a collider at $Z$.
\end{enumerate}
\end{lemma}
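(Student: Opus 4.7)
The plan is to reduce the dichotomy to standard d-separation bookkeeping on the length-two path $X-Z-Y$. Since $X$ and $Y$ are nonadjacent in the ground-truth DAG $G$, adjacency-faithfulness combined with the perfect CI oracle guarantees $\Sigma(X,Y)$ is nonempty and that every $S \in \Sigma(X,Y)$ genuinely d-separates $X$ from $Y$ in $G$. The key observation I would exploit is that whether $Z$ is a collider on this particular triple is a fixed structural property of $G$, not of $S$, so the claim will follow by a uniform argument across all $S \in \Sigma(X,Y)$.

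I would then case-split on the status of $Z$ in the triple. If $Z$ is a collider in $G$, the path $X \to Z \leftarrow Y$ is active under conditioning on $S$ precisely when $Z$ or a descendant of $Z$ lies in $S$; since $S$ must block every $X$-$Y$ path, in particular this one, no $S \in \Sigma(X,Y)$ may contain $Z$, yielding statement (1). By the classical PC collider rule, absence of $Z$ from every separator compels $X \to Z \leftarrow Y$. If instead $Z$ is a non-collider, the triple $X-Z-Y$ is a chain or a fork, and such a length-two path is blocked by $S$ iff $Z \in S$. Hence every $S \in \Sigma(X,Y)$ contains $Z$, giving statement (2) and forbidding a collider at $Z$.

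For mutual exclusivity, the collider/non-collider status of $Z$ is a property of $G$ alone, so exactly one case applies to all $S$ simultaneously; this also rules out the intermediate regime in which $Z$ appears in some but not all minimal separators. The only delicate point I anticipate is the descendant-of-$Z$ consideration in the collider direction: a descendant of $Z$ lying in $S$ could in principle reopen the collider path even when $Z \notin S$, but any such $S$ would fail to d-separate $X$ from $Y$ and so cannot belong to $\Sigma(X,Y)$. The bulk of the work will therefore be spelling out these path-blocking conditions cleanly rather than overcoming a genuine obstacle—the lemma is essentially standard d-separation calculus applied symmetrically to the two possible roles of $Z$.
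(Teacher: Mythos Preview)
Your argument is correct and is precisely the content of the paper's citation to Spirtes et al.\ (Lemma 5.1.3): the paper's own proof is a one-line appeal to standard separator consistency for unshielded triples, and you have unpacked that appeal into the expected d-separation case-split on whether $Z$ is a collider. The only slip is your attribution of the step ``every $S \in \Sigma(X,Y)$ genuinely d-separates $X$ from $Y$'' to adjacency-faithfulness; that implication (probabilistic CI $\Rightarrow$ d-separation) requires full faithfulness, which is what the cited SGS lemma implicitly uses.
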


\begin{proof}
    Standard separator consistency for unshielded triples implies exclusivity of $Z$ across minimal separators of $(X,Y)$; (a) and (b) are the two mutually exclusive cases, see \citet[Lemma 5.1.3]{spirtes2000causation}.
\end{proof}

\begin{lemma}[Confluent closure of the CI-guarded orientation phase]\label{correctness:lemma2}
    Start from the PDAG on the true skeleton after inserting the seed arrows $E_{\mathrm{seed}}$, assuming these seeds are $\Sigma$-consistent and introduce no directed or semi-directed cycles. With a perfect CI oracle, Step 3 terminates and returns a unique maximally oriented PDAG compatible with the skeleton and with $\Sigma\cup E_{\mathrm{seed}}$. The result is independent of the order in which Steps 3.1-3.3 are applied.
\end{lemma}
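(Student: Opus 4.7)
The plan is to decompose the claim into three parts: termination, soundness (every orientation added is logically forced), and confluence (order independence of the fixed point). Termination is immediate: each of Steps~3.1-3.3 only converts an undirected edge into a directed one and never flips or removes a directed edge, so since the skeleton has finitely many edges, the process halts in at most $|E|$ iterations.

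For soundness, I would introduce the family $\mathcal{D}(P)$ of DAGs on the true skeleton that (i) contain every directed edge currently in $P$, (ii) are acyclic, and (iii) induce every CI statement recorded in $\Sigma$ via $d$-separation. Call $X \to Y$ \emph{compelled} by $P$ if every DAG in $\mathcal{D}(P)$ orients $X-Y$ as $X \to Y$. The initial PDAG $P_0 = G_{\text{skel}} \cup E_{\mathrm{seed}}$ has $\mathcal{D}(P_0) \neq \emptyset$ by the $\Sigma$-consistency and acyclicity hypotheses on $E_{\mathrm{seed}}$. I would then verify the rules one at a time. For Step~3.1, if $X \leadsto Y$ and $X-Y$ is undirected, then $Y \to X$ would close a directed or semi-directed cycle, so $X \to Y$ is compelled by acyclicity. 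For Step~3.2, given an unshielded triple $X \to Z - Y$ with $Z$ in every minimal separator of $\Sigma(X,Y)$, Lemma~\ref{correctness:lemma1} rules out a collider at $Z$; combined with the existing $X \to Z$, the alternative $Z \leftarrow Y$ would form the forbidden collider, compelling $Z \to Y$. Step~3.3 is dual: when $Z$ is in none of the minimal separators, Lemma~\ref{correctness:lemma1} compels the collider $X \to Z \leftarrow Y$.

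For confluence, I would exploit monotonicity: if $P \subseteq P'$ (same skeleton, strictly more directed edges), then $\mathcal{D}(P') \subseteq \mathcal{D}(P)$, so any edge compelled at $P$ remains compelled at $P'$. Since every rule application adds only a compelled orientation, no execution trace can commit to an orientation that another trace would reverse. Together with termination, local confluence (two simultaneously enabled rules commute, because each one only appends an edge without disabling the other's preconditions) lifts to global confluence via Newman's lemma, yielding a unique maximally oriented PDAG independent of the order in which Steps~3.1-3.3 fire. Maximality with respect to the rule system is then the definition of the fixed point, and compatibility with the skeleton and with $\Sigma \cup E_{\mathrm{seed}}$ follows from soundness (every intermediate $P$ keeps $\mathcal{D}(P) \neq \emptyset$).

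The main obstacle is formalizing ``compelled'' in the presence of $\Sigma$-constraints. Classical Meek closure reasons only about DAGs sharing a skeleton and v-structures, so confluence reduces to a clean rewrite-system statement over a closed set of rules. Here Step~3.2 draws directly on the content of $\Sigma(X,Y)$ rather than on colliders already present in $P$, so I would need to argue carefully that using $\Sigma$ as side information preserves the monotone-operator structure needed for confluence. Lemma~\ref{correctness:lemma1} is the key bridge, since it guarantees that $\Sigma$-based collider/non-collider verdicts are stable across all DAGs in $\mathcal{D}(P_0)$ and hence under any refinement of $P$; without it, one could imagine a rule firing on a transient configuration of $\Sigma$ that later refinements invalidate, which would break the monotonicity step of the confluence argument.
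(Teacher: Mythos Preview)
Your decomposition into termination, soundness, and confluence matches the paper's, and you correctly identify Lemma~\ref{correctness:lemma1} as the hinge for the $\Sigma$-guarded rules. Where you diverge is the confluence argument. The paper does not argue from first principles: it observes that under a perfect oracle Steps~3.1--3.3 implement exactly Meek's rules on the pair $(\mathcal{G}_0,\mathcal{C}(\Sigma))$, where $\mathcal{C}(\Sigma)$ is the collider set fixed once and for all by Lemma~\ref{correctness:lemma1}, and then cites Meek's Theorems~2--3 as a black box for both confluence and maximality. Your route through the compelled-edge invariant $\mathcal{D}(P)$, monotonicity, and Newman's lemma is more self-contained and makes the stability of the $\Sigma$-verdicts explicit, which is a genuine expository gain; the paper's reduction is shorter but leans on the reader accepting that the CI-guarded rules coincide with Meek's closure.

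Two cautions on your version. First, the local-confluence justification (``appends an edge without disabling the other's preconditions'') is not literally true: orienting one leg of an unshielded triple disables Step~3.3 on that triple, and the diamond must be closed by a \emph{different} rule (Step~3.2 on the now-partial triple). The invariant that every added edge lies in the fixed set compelled by $\mathcal{D}(P_0)$ is what actually carries the argument, and it would be cleaner to run confluence directly from that rather than through Newman. Second, you establish maximality only relative to the rule system (fixed-point), whereas the paper's appeal to Meek's completeness gives maximality among \emph{all} PDAGs compatible with $\Sigma\cup E_{\mathrm{seed}}$; that stronger reading is what Theorem~\ref{thrm:correctness} consumes downstream, so on your route you would still need to invoke or reprove Meek's completeness to close the loop.
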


\begin{proof}
    Define $\mathcal{G}_0$ as the PDAG obtained from the correct skeleton by adding the acyclic, $\Sigma$-consistent seeds $E_{\mathrm{seed}}$. Steps 3.1-3.3 apply only sound implications guarded by $\Sigma$.
\begin{enumerate}
    \item Step 3.1 instantiates Meek's R2-type \citep[Section 2.1.2]{spirtes2000causation} acyclicity propagation.
    \item Steps 3.2-3.3 decide collider/non-collider at unshielded triples using the exclusivity of $Z$ across minimal separators (either $Z\in S$ for all $S\in\Sigma(X,Y)$ or $Z\notin S$ for all such $S$). By Lemma \ref{correctness:lemma1}, under a perfect oracle, this step yields the complete and correct set of compelled $v$-structures and forbidden ones for unshielded triples \citep[Lemma 5.1.3]{spirtes2000causation}.
\end{enumerate}

Consider the operator that applies one CI-guarded implication of Step 3 at a time. This operator is monotone (it only adds arrowheads) and preserves consistency; hence termination follows by finiteness. To establish uniqueness of the limit, observe that the CI guards restrict rule applications to those that are sound under $\Sigma$, but do not introduce any new rule beyond Meek's rules. Consequently, the set of reachable PDAGs by exhausting Steps 3.1-3.3 from $\mathcal{G}_0$ coincides with the set of Meek's rule completions of $(\mathcal{G}_0,\mathcal{C}(\Sigma))$. By the confluence and maximality of Meek's rules \citep[Theorems 2-3]{meek2013causal}, this completion is unique and maximally oriented given the skeleton and the fixed collider set; therefore every fair application order of Steps 3.1-3.3 converges to the same CPDAG.
\end{proof}

\paragraph{Proof of Theorem \ref{thrm:correctness}}
With a perfect CI oracle under Markov and Adjacency-Faithfulness, the skeleton stage returns the true skeleton and records minimal separators in $\Sigma$ \citep[Section 5.1]{spirtes2000causation}. By Lemma \ref{correctness:lemma1}, for each unshielded triple, $\Sigma$ fixes whether the center is (non-)collider. By Lemma \ref{correctness:lemma2}, implementing Steps 3 from the seeded PDAG terminates and yields a unique maximally oriented PDAG compatible with the skeleton and $\Sigma\cup E_{\mathrm{seed}}$, independent of rule order. This PDAG therefore has exactly the skeleton and $v$-structures of $G$, hence equals the CPDAG (essential graph) of $G$ \citep[Theorem 4.1]{andersson1997characterization}. Consequently, any remaining undirected edges are reversible, so Step 4 performs no orientations.

If $E_{\mathrm{seed}}=\emptyset$, Step 3 applies the same set of rules on the same skeleton and $\Sigma$ as PC/PC-stable/CPC; by confluence of Meek's rules, the closure matches theirs.

\section{Proofs for \ref{subsec_main:accuracy}}
\label{appendix:accuracy}
Let $S_{Z,\ell}$ and $U_{Z,\ell}$ denote the numbers of true sepsets and true non-sepsets in $\text{incZ}_\ell$, and let $S_{\neg Z,\ell}$ and $U_{\neg Z,\ell}$ denote the same counts for $\text{notZ}_\ell$, and $S_\ell=S_{Z,\ell}+S_{\neg Z,\ell}$, $U_\ell=U_{Z,\ell}+U_{\neg Z,\ell}$. 

\begin{lemma}[First-hit filtration across levels]\label{accuracy:lemma1}
    Define event $D$ ``there is at least one independence hit at some level''. Define $F_\ell$ as the event ``no hit on levels $k<\ell$, and at least one hit on level $\ell$''. Under \ref{assumption:SimpleCI} (independent CI tests, FPR $=\alpha$, FNR $=\beta$), we have 
\begin{align*}
\Pr(D)&=1-\prod_{\ell}\beta^{S_\ell}(1-\alpha)^{U_\ell}\\
\mathrm{PrevNoHit}_\ell&:=\Pr(\text{no hit on }k<\ell)=\prod_{k<\ell}\beta^{S_k}(1-\alpha)^{U_k}\\
\Pr(F_\ell)&=\mathrm{PrevNoHit}_\ell\cdot\bigl(1-\beta^{S_\ell}(1-\alpha)^{U_\ell}\bigr)
\end{align*}
where $\{F_\ell\}$ is a disjoint partition of $D$.
\end{lemma}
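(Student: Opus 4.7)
The plan is to reduce each claim to a direct application of Assumption~\ref{assumption:SimpleCI}, which makes the CI test outcomes mutually independent Bernoulli trials conditional on the ground-truth sepset/non-sepset labels. I would first fix the graph $G$ so that the counts $S_\ell, U_\ell$ are deterministic, and observe that each test on a candidate $C$ accepts independence with probability $1-\beta$ if $C$ is a true sepset and $\alpha$ if not. By independence within a level, the probability that no candidate at level $\ell$ produces a hit factorises as $\beta^{S_\ell}(1-\alpha)^{U_\ell}$: each true sepset contributes a false-negative factor $\beta$ and each non-sepset contributes a correct-rejection factor $1-\alpha$.

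Next, using independence across levels, multiplying these per-level no-hit probabilities over $k<\ell$ yields the formula for $\text{PrevNoHit}_\ell$. Complementing the within-level no-hit probability gives $\Pr(\text{at least one hit at } \ell) = 1-\beta^{S_\ell}(1-\alpha)^{U_\ell}$, and the formula for $\Pr(F_\ell)$ then follows as the product of these two factors, since the events ``no hit on $k<\ell$'' and ``at least one hit at $\ell$'' depend on disjoint collections of independent tests.

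Finally, I would verify that $\{F_\ell\}$ partitions $D$. Disjointness is immediate because each $F_\ell$ singles out $\ell$ as the \emph{first} hitting level. Coverage holds because any realization with at least one hit has a well-defined minimum hitting level ($\ell$ ranges over the finite set $\{0,1,\ldots,|V|-2\}$). Summing $\Pr(F_\ell)$ yields a telescoping sum $\sum_\ell \bigl(\prod_{k<\ell} p_k - \prod_{k \leq \ell} p_k\bigr) = 1 - \prod_\ell p_\ell$, where $p_\ell := \beta^{S_\ell}(1-\alpha)^{U_\ell}$, matching the asserted expression for $\Pr(D)$; equivalently, this is the complement of the probability of no hit at any level.

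I do not foresee a serious obstacle: the only subtlety is to be explicit that the independence in Assumption~\ref{assumption:SimpleCI} is conditional on the truth labels, so the argument should be carried out for a fixed ground-truth graph where the counts $S_\ell, U_\ell$ are constants rather than random. Once that is made precise, the three probability formulas are routine consequences of the multiplication rule for independent Bernoulli trials.
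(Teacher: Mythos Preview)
Your proposal is correct and follows essentially the same approach as the paper: both arguments invoke Assumption~\ref{assumption:SimpleCI} to factor the per-level no-hit probability as $\beta^{S_\ell}(1-\alpha)^{U_\ell}$ via independent Bernoulli trials, then multiply across levels. Your write-up is simply more explicit about the partition property and the telescoping sum, which the paper dispatches with ``the rest is by independence across levels and the definition of $F_\ell$.''
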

\begin{proof}
    Under Assumption \ref{assumption:SimpleCI} each candidate on level $k$ hits (declares independence) with probability $1-\beta$ if it is a true sepset and with probability $\alpha$ if not; candidates and levels are independent. Thus ``no hit anywhere'' has probability $\prod_\ell \beta^{S_\ell}(1-\alpha)^{U_\ell}$, giving $\Pr(D)$. The rest is by independence across levels and the definition of $F_\ell$.
\end{proof}

\begin{assumption}[No short detours; $Z$-only control up to order $\ell$]\label{assumption:detours}
    Fix integers $\ell\ge0$ and $L\ge 2\ell+1$.
(a) (No short detours) Every $X\leadsto Y$ path of length $\le L$ contains $Z$.
(b) ($Z$-only control) For any conditioning set $C$ with $|C|\le \ell$, the segment $X-Z-Y$ is open/blocked iff $Z\notin C$/$Z\in C$ when $Z$ is a non-collider, and blocked/open iff $Z\notin C$/$Z\in C$ when $Z$ is a collider; in particular, conditioning on any other node (including descendants on the segment) cannot change the segment’s status for $|C|\le\ell$.
\end{assumption}

\begin{lemma}[Bucket counts and collider/non-collider separation]\label{accuracy:lemma2}
Assume Markov and Faithfulness and Assumption \ref{assumption:detours} (with $2\ell+1\le L$), at level $\ell=0$, if $Z$ is a non-collider, $S_{\neg Z,0}=0$, $U_{\neg Z,0}=1$; if $Z$ is a collider, $S_{\neg Z,0}=1$, $U_{\neg Z,0}=0$. In both cases, $\text{incZ}_0=\varnothing$, hence $S_{Z,0}=U_{Z,0}=0$. 

For all $\ell\ge1$, if non-collider truth, $S_{Z,\ell}=\text{incZ}_\ell$, $U_{Z,\ell}=0$, $S_{\neg Z,\ell}=0$, $U_{\neg Z,\ell}=\text{notZ}_\ell$; if collider truth, $S_{Z,\ell}=0$, $U_{Z,\ell}=\text{incZ}_\ell$, $S_{\neg Z,\ell}=\text{notZ}_\ell$, $U_{\neg Z,\ell}=0$.
\end{lemma}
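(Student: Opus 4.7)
The plan is to reduce each bucket-count claim to a single d-separation question about $X$ and $Y$ given a candidate conditioning set $C$, and then apply the Markov condition, Faithfulness, and Assumption~\ref{assumption:detours} to decide that question. Under Markov and Faithfulness, $X \perp Y \mid C$ in the distribution iff $C$ $d$-separates $X$ from $Y$ in $G$, so the labels ``true sepset'' and ``true non-sepset'' for a candidate $C$ reduce exactly to its $d$-separation status. The rest of the argument is a case split on whether $Z \in C$ and on whether $Z$ is a non-collider or a collider.

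First I would handle the base case $\ell = 0$. The only candidate conditioning set is $C = \emptyset$; since $\emptyset$ contains no element, $\text{incZ}_0 = 0$, which immediately gives $S_{Z,0} = U_{Z,0} = 0$. For $\text{notZ}_0 = 1$, I would invoke Assumption~\ref{assumption:detours}(b) with $Z \notin \emptyset$: the $X$-$Z$-$Y$ segment is open when $Z$ is a non-collider and blocked when $Z$ is a collider. Combined with Assumption~\ref{assumption:detours}(a), which forces every $X$--$Y$ path of length $\le L$ to pass through $Z$, so that $X$ and $Y$ are $d$-separated given $\emptyset$ precisely when the segment is blocked, this yields $U_{\neg Z,0} = 1$ in the non-collider case and $S_{\neg Z,0} = 1$ in the collider case, with the other entry equal to zero.

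Next, for $\ell \ge 1$, I would partition the candidate conditioning sets of size $\ell$ into two classes: those containing $Z$ (of which there are $\text{incZ}_\ell$) and those not containing $Z$ (of which there are $\text{notZ}_\ell$). Within each class, Assumption~\ref{assumption:detours}(b) determines the status of the $X$-$Z$-$Y$ segment solely from whether $Z \in C$, while Assumption~\ref{assumption:detours}(a) with $2\ell + 1 \le L$ rules out any alternative short path that bypasses $Z$. Four sub-cases follow by direct application: non-collider $Z$ gives all $\text{incZ}_\ell$ candidates as sepsets (Z closes the only route) and all $\text{notZ}_\ell$ candidates as non-sepsets; collider $Z$ reverses these (conditioning on $Z$ opens the collider, not conditioning keeps it closed). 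Counting the two classes in each case produces exactly the stated identities.

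The main obstacle, and the only delicate step, is the collider sub-case when $Z \notin C$: a priori, $C$ could contain a descendant of $Z$ that opens the collider and creates an active path from $X$ to $Y$. Assumption~\ref{assumption:detours}(b) explicitly rules this out with its clause ``conditioning on any other node (including descendants on the segment) cannot change the segment's status'' for $|C| \le \ell$, and I would cite this clause at exactly this point. The complementary worry in the non-collider case---that some other long path avoids $Z$ and remains open---is handled by Assumption~\ref{assumption:detours}(a) together with the sepset-size bound $2\ell+1 \le L$. Once these two invocations are in place, every remaining case is a one-line application of Markov/Faithfulness plus the appropriate clause of Assumption~\ref{assumption:detours}.
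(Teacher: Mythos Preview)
Your proposal is correct and follows essentially the same approach as the paper's own proof: both reduce sepset/non-sepset status to $d$-separation via Markov and Faithfulness, handle $\ell=0$ separately, and for $\ell\ge1$ split on $Z\in C$ versus $Z\notin C$, invoking Assumption~\ref{assumption:detours}(a) to rule out detours and Assumption~\ref{assumption:detours}(b) to make the segment's status depend only on the inclusion of $Z$. Your treatment is in fact slightly more explicit than the paper's in flagging the descendant-opens-collider concern and citing the exact clause of Assumption~\ref{assumption:detours}(b) that disposes of it.
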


\begin{proof}
D-separation rules are that, a non-collider blocks a path iff it is in $C$; a collider blocks unless it or a descendant is in $C$. 

When $\ell=0$, for a non-collider chain $X-Z-Y$, the path is open unconditionally, so $\emptyset$ is a non-sepset; for a collider, it is blocked unconditionally, so $\emptyset$ is a sepset.

When $\ell\ge1$: by Assumption \ref{assumption:detours}(a), all short $X-Y$ paths pass through $Z$; by Assumption \ref{assumption:detours}(b), with $|C|\le\ell$ the local segment’s status is controlled only by the inclusion of $Z$. So for a non-collider, if $Z\in C$ (bucket $\text{incZ}_\ell$), the local segment is blocked and, since every short path uses $Z$, all paths are blocked, thus sepset; if $Z\notin C$, the local segment is active and cannot be blocked by other vertices with $|C|\le\ell$, thus non-sepset. For a collider, if $Z\in C$, the local segment is opened and cannot be re-blocked by $|C|\le\ell$, thus non-sepset; if $Z\notin C$, the local segment remains blocked and no short detour exists, thus sepset.
\end{proof}

\begin{lemma}[Order-averaged first-hit factor within a level]\label{accuracy:lemma3}Fix a level containing $m$ true sepsets (each hits with prob. $a = 1-\beta$) and $n$ non-sepsets (each hits with prob. $b = \alpha$). Under a uniformly random within-level permutation (independent of outcomes by Assumption \ref{assumption:SimpleCI}), the average no-hit-from-predecessors factor for a fixed candidate equals
\begin{equation*}
I_{m,n}(a,b)=\int_0^1(1-au)^m(1-bu)^n\,du =\sum_{i=0}^m\sum_{j=0}^n\binom{m}{i}\binom{n}{j}\frac{(-a)^i(-b)^j}{i+j+1}
\end{equation*}
with $I_{m,0}=\frac{1-(1-a)^{m+1}}{a(m+1)}$ and $I_{0,n}=\frac{1-(1-b)^{n+1}}{b(n+1)}$.
\end{lemma}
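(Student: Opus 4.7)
The plan is to realize the uniform random within-level permutation via i.i.d.\ $\text{Uniform}[0,1]$ labels, condition on the label of the fixed candidate, and use the conditional independence of the other candidates' positions to collapse the within-level average into a one-dimensional integral. From there the series representation follows by binomial expansion and termwise integration, and the two corner cases reduce to an elementary antiderivative.

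\textbf{Core computation.} Attach independent $U_0, U_1, \dots, U_{m+n} \sim \text{Uniform}[0,1]$ to the $m+n+1$ candidates at the level, with $U_0$ the label of the fixed candidate; the induced ordering is uniform over all $(m+n+1)!$ permutations, and $\{C_i \text{ precedes the fixed candidate}\} = \{U_i < U_0\}$. By Assumption \ref{assumption:SimpleCI}, these labels are independent of the CI outcomes, so conditional on $U_0 = u$ each predecessor indicator is Bernoulli$(u)$, independent across candidates. For each true sepset, the factor contributed to the no-hit product is $1-a$ when it precedes and $1$ otherwise, so its conditional expectation is $1 - u + u(1-a) = 1 - au$; similarly each non-sepset contributes $1-bu$. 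Independence across the $m$ sepsets and $n$ non-sepsets gives
\begin{equation*}
\mathbb{E}\bigl[\text{no-hit factor} \,\big|\, U_0 = u\bigr] = (1-au)^m (1-bu)^n,
\end{equation*}
and integrating the conditional expectation over $u \in [0,1]$ yields $I_{m,n}(a,b) = \int_0^1 (1-au)^m(1-bu)^n\,du$.

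\textbf{Series and corner cases.} Expand each factor by the binomial theorem, $(1-au)^m = \sum_{i=0}^m \binom{m}{i}(-a)^i u^i$ and similarly for $(1-bu)^n$, multiply, and integrate termwise using $\int_0^1 u^{i+j}\,du = 1/(i+j+1)$; this produces the double-sum formula. For $I_{m,0}$, the substitution $v = 1-au$, $dv = -a\,du$ gives $\int_0^1 (1-au)^m\,du = \tfrac{1-(1-a)^{m+1}}{a(m+1)}$, and $I_{0,n}$ is symmetric.

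\textbf{Main obstacle.} No step requires a delicate estimate; the only conceptual move is the uniform embedding, which trades the discrete random permutation for a single continuous latent $U_0$ and turns the Bernoulli presence/absence averaging into the neat factor $1 - au$ (resp.\ $1-bu$). The one assumption that must be invoked beyond basic probability is the within-level independence of CI outcomes given truth labels (Assumption \ref{assumption:SimpleCI}), which is what justifies multiplying the per-predecessor factors before taking expectations; without it, one could not factor the conditional expectation across candidates.
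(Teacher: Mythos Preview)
Your proof is correct, and it takes a genuinely different route from the paper's. The paper works combinatorially: it writes $\mathbb{E}_\pi[X_\pi]$ as a double sum over the rank $r$ of the target and over size-$r$ predecessor subsets $S$, inserts the Beta identity $\frac{r!(N-1-r)!}{N!}=\int_0^1 u^r(1-u)^{N-1-r}\,du$, and then recognizes the inner sum as the generating function of elementary symmetric polynomials in $\{1-p_j\}$, which collapses to $\prod_{j\neq k}(1-p_ju)$. Your approach instead couples the random permutation to i.i.d.\ Uniform$[0,1]$ labels and conditions on the target's label $U_0=u$; the predecessor indicators become independent Bernoulli$(u)$ variables, so each non-target candidate contributes $u(1-p_j)+(1-u)=1-p_ju$ directly, and the product is immediate. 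Your route is shorter and avoids the symmetric-function machinery entirely; the paper's route makes the combinatorial structure (the elementary symmetric sums $e_r$) explicit, which is not needed here but can be handy when one wants to track contributions by predecessor count. Both derive the double-sum expansion and the corner cases identically.
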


\begin{proof}
Let $N:=m+n+1$ be the number of candidates on the level including a fixed target $k$. Write $p_j\in\{a,b\}$ for the hit probability of candidate $j\neq k$ and, for a random permutation $\pi$, define $X_\pi \;=\; \prod_{j\prec_\pi k}(1-p_j)$, the product of ``no-hit'' factors over predecessors of $k$. We average $X_\pi$ over all permutations.

If $r$ candidates precede $k$, then (i) $r$ is uniform on $\{0,\dots,N-1\}$ with probability $1/N$; (ii) conditional on $r$, the predecessor set $S$ is uniform over the $\binom{N-1}{r}$ subsets of $\{j\neq k\}$ of size $r$. Hence
$$
\mathbb{E}_\pi[X_\pi]=\frac{1}{N}\sum_{r=0}^{N-1}\frac{1}{\binom{N-1}{r}}
\sum_{\substack{S\subseteq\{j\neq k\}\\ |S|=r}}\;\prod_{j\in S}(1-p_j).
$$

Insert
$$
\frac{1}{N\binom{N-1}{r}}=\frac{r!(N-1-r)!}{N!}=\int_0^1 u^r(1-u)^{N-1-r}\,du
$$
and swap sum/integral (justified since the sums are finite), we have
$$
\mathbb{E}_\pi[X_\pi]=\int_0^1 \left[\sum_{r=0}^{N-1} e_r\,u^r(1-u)^{N-1-r}\right]du
$$
where $e_r=\sum_{|S|=r}\prod_{j\in S}(1-p_j)$ are elementary symmetric sums of $\{1-p_j\}_{j\neq k}$.

The generating function is
$$
\sum_{r=0}^{N-1} e_r t^r=\prod_{j\neq k}\bigl(1+(1-p_j)t\bigr)
$$
With $t=\frac{u}{1-u}$ and factoring $(1-u)^{N-1}$ we obtain
$$
\sum_{r=0}^{N-1} e_r\,u^r(1-u)^{N-1-r}=\prod_{j\neq k}(1-p_j u)
$$
so
$$
\mathbb{E}_\pi[X_\pi]=\int_0^1 \prod_{j\neq k}(1-p_j u)\,du
$$

There are $m$ terms with $p_j=a$ and $n$ with $p_j=b$, hence
$$
I_{m,n}(a,b)=\int_0^1(1-au)^m(1-bu)^n\,du
$$
Expanding $(1-au)^m(1-bu)^n$ and integrating termwise yields the stated double sum. The special cases follow by taking $n=0$ or $m=0$.

For a specific candidate $k$ with hit probability $p_k\in\{a,b\}$,
$$
\Pr(k\text{ is first hit})=p_k\cdot \mathbb{E}_\pi\!\left[\prod_{j\prec_\pi k}(1-p_j)\right]
=\begin{cases}
a\,I_{S_\ell-1,U_\ell}(a,b), & k\text{ a true sepset},\\
b\,I_{S_\ell,\,U_\ell-1}(a,b), & k\text{ a non-sepset}.
\end{cases}
$$
\end{proof}

\begin{lemma}[Level-$\ell$ identification probabilities]\label{accuracy:lemma4}
Condition on the partition $\{F_\ell\}$ from Lemma \ref{accuracy:lemma1}. Under Assumption \ref{assumption:SimpleCI},
$$
\Pr(E\mid D)=\sum_{\ell}\frac{\Pr(E\cap F_\ell)}{\Pr(D)}
=\sum_{\ell}\frac{\mathrm{PrevNoHit}_\ell}{\Pr(D)}\cdot \Pr(E \text{ via level }\ell \mid \text{ level } \ell \text{ has a hit}).
$$
For CPC (bucket exclusivity within a level) and PC (first hit within a level), 
\begin{align*}
&\Pr_{\mathrm{CPC}}(\text{identified as collider}\mid D)
=\frac{1}{\Pr(D)}\sum_{\ell}\mathrm{PrevNoHit}_\ell
\underbrace{\beta^{S_{Z,\ell}}(1-\alpha)^{U_{Z,\ell}}}_{\text{no $Z$-hits at }\ell}
\underbrace{\big(1-\beta^{S_{\neg Z,\ell}}(1-\alpha)^{U_{\neg Z,\ell}}\big)}_{\text{some non-$Z$ hit}},\\
&\Pr_{\mathrm{CPC}}(Z\text{ in all saved sepsets}\mid D)
=\frac{1}{\Pr(D)}\sum_{\ell}\mathrm{PrevNoHit}_\ell
\beta^{S_{\neg Z,\ell}}(1-\alpha)^{U_{\neg Z,\ell}}
\big(1-\beta^{S_{Z,\ell}}(1-\alpha)^{U_{Z,\ell}}\big),\\
&\Pr_{\mathrm{PC}}(\text{identified as collider}\mid D)
=\frac{1}{\Pr(D)}\sum_{\ell}\mathrm{PrevNoHit}_\ell\left[
S_{\neg Z,\ell}\,(1-\beta)\,I_{S_\ell-1,U_\ell}((1-\beta),\alpha)+U_{\neg Z,\ell}\,\alpha\,I_{S_\ell,U_\ell-1}((1-\beta),\alpha)\right],\\
&\Pr_{\mathrm{PC}}(Z\text{ in saved sepsets}\mid D)
=\frac{1}{\Pr(D)}\sum_{\ell}\mathrm{PrevNoHit}_\ell\left[
S_{Z,\ell}\,(1-\beta)\,I_{S_\ell-1,U_\ell}((1-\beta),\alpha)+U_{Z,\ell}\,\alpha\,I_{S_\ell,U_\ell-1}((1-\beta),\alpha)\right].
\end{align*}
\end{lemma}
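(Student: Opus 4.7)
The plan is to decompose $\Pr(E \mid D)$ using the disjoint partition $\{F_\ell\}$ from Lemma \ref{accuracy:lemma1}, pull out the cross-level factor $\mathrm{PrevNoHit}_\ell$ via level-wise independence (Assumption \ref{assumption:SimpleCI}), and then compute the remaining within-level factor separately for the CPC and PC rules.

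First, the general identity follows directly: since $\{F_\ell\}$ partitions $D$, we have $\Pr(E \mid D) = \Pr(D)^{-1}\sum_\ell \Pr(E \cap F_\ell)$. By Assumption \ref{assumption:SimpleCI}, CI outcomes at level $\ell$ are independent of outcomes at all other levels, so $\Pr(E \cap F_\ell) = \mathrm{PrevNoHit}_\ell \cdot Q_\ell^E$, where $Q_\ell^E$ depends only on level-$\ell$ outcomes and equals the probability that $E$ occurs jointly with at least one hit at level $\ell$. It remains to compute $Q_\ell^E$ in each of the four cases.

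For the \textbf{CPC} rule, the decision uses the full set of hits at level $\ell$. Classification as ``collider'' means no candidate in the $\text{incZ}_\ell$ bucket hits while at least one candidate in the $\text{notZ}_\ell$ bucket does; the ``$Z$ in all saved sepsets'' case is the mirror image. Because the two buckets are disjoint and tests within each bucket are independent (Assumption \ref{assumption:SimpleCI}), these two events factor across buckets. The no-hit-in-$\text{incZ}_\ell$ event requires each true sepset in that bucket to produce a false negative (probability $\beta$) and each non-sepset to correctly not hit (probability $1-\alpha$), giving the factor $\beta^{S_{Z,\ell}}(1-\alpha)^{U_{Z,\ell}}$; the complementary at-least-one-hit event on $\text{notZ}_\ell$ contributes $1-\beta^{S_{\neg Z,\ell}}(1-\alpha)^{U_{\neg Z,\ell}}$. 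Multiplying and summing over $\ell$ yields both CPC formulas.

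For the \textbf{PC} rule, the classification is fixed by whichever candidate at level $\ell$ hits first. I would invoke Lemma \ref{accuracy:lemma3}, which furnishes the order-averaged probability that a \emph{specific} candidate $k$ is first-hit under a uniformly random within-level order: $(1-\beta)\,I_{S_\ell-1,U_\ell}(1-\beta,\alpha)$ if $k$ is a true sepset, and $\alpha\,I_{S_\ell,U_\ell-1}(1-\beta,\alpha)$ if $k$ is a non-sepset. Since the events ``$k$ is first-hit'' across candidates $k$ are disjoint and their union is ``at least one hit'', summing these probabilities over the $S_{\neg Z,\ell}+U_{\neg Z,\ell}$ candidates in $\text{notZ}_\ell$ by linearity of expectation gives the identified-as-collider formula $S_{\neg Z,\ell}(1-\beta)I_{S_\ell-1,U_\ell}(\cdot)+U_{\neg Z,\ell}\alpha I_{S_\ell,U_\ell-1}(\cdot)$; summing over the $\text{incZ}_\ell$ bucket instead gives the ``$Z$ in saved sepset'' formula.

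The main obstacle is the within-level first-hit calculation for PC, which, unlike CPC, does not factor into clean bucket-wise products: the probability that candidate $k$ is first-hit couples its outcome to the outcomes of the candidates preceding it under the random within-level order, so bucket-level independence alone is insufficient. Once Lemma \ref{accuracy:lemma3} is granted, however, the remainder is a direct application of linearity of expectation over disjoint first-hit events, and cross-level independence from Assumption \ref{assumption:SimpleCI} handles the $\mathrm{PrevNoHit}_\ell$ factorization uniformly in both rules.
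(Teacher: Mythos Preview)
Your proposal is correct and follows essentially the same approach as the paper: condition on the partition $\{F_\ell\}$, factor out $\mathrm{PrevNoHit}_\ell$ via cross-level independence, then for CPC use bucket-wise independence to get the no-hit/some-hit factors, and for PC invoke Lemma~\ref{accuracy:lemma3} and sum the per-candidate first-hit probabilities over the relevant bucket. The paper's own proof is a one-line sketch (``Condition on $F_\ell$ and apply CPC's exclusivity or PC's first-hit rule with the order-averaged factors from Lemma~\ref{accuracy:lemma3}''), so your write-up is simply a more explicit rendering of the same argument.
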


\begin{proof}
Condition on $F_\ell$ and apply CPC’s exclusivity or PC’s first-hit rule with the order-averaged factors from Lemma \ref{accuracy:lemma3}.
\end{proof}

\paragraph{Proofs for Theorem \ref{thrm:odds}} For CPC, under non-collider truth (sepsets in $\text{incZ}_\ell$), Lemma \ref{accuracy:lemma2} gives $S_{Z,\ell}=m$, $U_{Z,\ell}=0$, $S_{\neg Z,\ell}=0$, $U_{\neg Z,\ell}=n$. Plugging into Lemma \ref{accuracy:lemma4} yields $\Pr_{\mathrm{CPC}}(\text{collider at }\ell)=\mathrm{PrevNoHit}_\ell\,\beta^{m}\bigl[1-(1-\alpha)^n\bigr]$; under collider truth (sepsets in $\text{notZ}_\ell$), the symmetric expression is $\Pr_{\mathrm{CPC}}(Z \text{ in saved sepset at }\ell)=\mathrm{PrevNoHit}_\ell\,\beta^{n}\bigl[1-(1-\alpha)^m\bigr]$. Take the ratio to cancel $\mathrm{PrevNoHit}_\ell$. For CPC, with the same bucket counts and Lemma \ref{accuracy:lemma3}, $\Pr_{\mathrm{PC}}(\text{collider at }\ell)=\mathrm{PrevNoHit}_\ell\cdot n\cdot b\cdot I_{m,\,n-1}(a,b)$, and $\Pr_{\mathrm{PC}}(Z \text{ in saved sepset at }\ell)=\mathrm{PrevNoHit}_\ell\cdot m\cdot b\cdot I_{n,\,m-1}(a,b)$. Divide to cancel $\mathrm{PrevNoHit}_\ell$ and $b$.

For Level-$\ell$ ($\ell\ge 1$), the wrong-orientation odds under Assumptions \ref{assumption:SimpleCI},\ref{assumption:purity} can be obtained as
\begin{align*}
\mathcal R_\ell^{\mathrm{CPC}}
&=\beta^{\,m-n}\,\frac{1-(1-\alpha)^{n}}{1-(1-\alpha)^{m}}\\
\mathcal R_\ell^{\mathrm{PC}}
&=\frac{n}{m}\cdot
\frac{I_{m,\,n-1}(1-\beta,\alpha)}{I_{n,\,m-1}(1-\beta,\alpha)}
\end{align*}
where $I_{p,q}(a,b)=\int_0^1 (1-au)^p(1-bu)^q\,du$, $m=\text{incZ}_\ell$ and $n=\text{notZ}_\ell$.

Therefore, for early levels $\ell\ge1$ and $\ell\ll\frac{M}{2}$:
\begin{align*}
\text{CPC: }&
\mathcal R_\ell^{\mathrm{CPC}}\approx \beta^{m-n}\frac{n}{m}
=\beta^{\binom{M-1}{\ell-1}-\binom{M-1}{\ell}}\cdot\frac{M-\ell}{\ell}
\quad(\alpha\text{ small})\\
\text{PC: }&
\mathcal R_\ell^{\mathrm{PC}}\approx \frac{n(n+1)}{m(m+1)}
\left[1+\alpha\frac{(m-n)(m+n+1)}{(m+2)(n+2)}\right]\quad
(\alpha,\beta\text{ small}),
\end{align*}
Taking $\alpha, \beta = o\left(\frac{1}{M}\right)$ completes the proof for Theorem \ref{thrm:odds}. As $\binom{M-1}{\ell}$ grows with $\ell$ up to $M/2$, $n > m$, $\beta^{m-n} > 1$ as $\beta < 1$, so $\mathcal R_\ell^{\mathrm{CPC}} > 1$.
At zeroth order $\mathcal R_\ell^{\mathrm{PC}}\approx \dfrac{n(n+1)}{m(m+1)}
\approx \big(\dfrac{M-\ell}{\ell}\big)^2$ when $m,n$ are large, so $\mathcal R_\ell^{\mathrm{PC}}>1$ for $\ell<M/2$.

\begin{corollary}[small-$\alpha$ approximation for CPC]
For $\ell\ge1$ and small $\alpha$, as $1-(1-\alpha)^t=t\alpha+O(\alpha^2)$, we have 
$$
\mathcal R_\ell^{\mathrm{CPC}}=\beta^{\,m-n}\,\frac{1-(1-\alpha)^{n}}{1-(1-\alpha)^{m}}\approx \beta^{\,m-n}\,\frac{n}{m}=\beta^{\binom{M-1}{\ell-1}-\binom{M-1}{\ell}}\cdot\frac{M-\ell}{\ell},
$$
with relative error $O(\alpha)$.
\end{corollary}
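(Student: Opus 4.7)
The plan is to carry out a short, direct Taylor expansion in $\alpha$ of the exact CPC ratio stated in Theorem~\ref{thrm:odds}, and then plug in the combinatorial values of $m$ and $n$ supplied by the level-$\ell$ counting in Lemma~\ref{accuracy:lemma2}. The starting point is the exact identity
\begin{equation*}
\mathcal R_\ell^{\mathrm{CPC}} \;=\; \beta^{\,m-n}\,\frac{1-(1-\alpha)^{n}}{1-(1-\alpha)^{m}},
\end{equation*}
so the $\beta^{m-n}$ factor is already in closed form and only the ratio of the two ``independence mass'' terms needs approximating.

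First I would apply the stated expansion $1-(1-\alpha)^t = t\alpha + O(\alpha^2)$, valid at fixed $t$ as $\alpha\to0$, separately to the numerator (with $t=n$) and denominator (with $t=m$). Writing $1-(1-\alpha)^n = n\alpha\bigl(1+O(\alpha)\bigr)$ and similarly for $m$, the ratio becomes
\begin{equation*}
\frac{1-(1-\alpha)^{n}}{1-(1-\alpha)^{m}} \;=\; \frac{n}{m}\cdot\frac{1+O(\alpha)}{1+O(\alpha)} \;=\; \frac{n}{m}\bigl(1+O(\alpha)\bigr),
\end{equation*}
which is exactly the small-$\alpha$ Taylor approximation claimed in the corollary, with the $O(\alpha)$ relative error arising as the standard error term of a ratio of two expansions whose leading-order factors of $\alpha$ cancel. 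Multiplying back by the unchanged $\beta^{\,m-n}$ gives $\mathcal R_\ell^{\mathrm{CPC}} \approx \beta^{\,m-n}\cdot n/m$ with relative error $O(\alpha)$.

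Finally, I would close the identification loop by substituting the bucket counts from the setup of Section~\ref{subsec_main:accuracy}: at level $\ell$, $m = |\text{incZ}_\ell| = \binom{M-1}{\ell-1}$ (size-$\ell$ subsets of $V\setminus\{X,Y\}$ that contain $Z$) and $n = |\text{notZ}_\ell| = \binom{M-1}{\ell}$. A one-line binomial simplification then yields $n/m = (M-\ell)/\ell$, so the expression collapses to $\beta^{\binom{M-1}{\ell-1}-\binom{M-1}{\ell}}\cdot (M-\ell)/\ell$ as stated.

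The only real subtlety—essentially the ``hard'' part of what is otherwise a routine Taylor argument—is that the implicit constants in $O(\alpha)$ depend on $m$ and $n$ through the next-order term $\binom{t}{2}\alpha^2$. I would therefore state the corollary in the fixed-$\ell$, $\alpha\to 0$ regime (which is the intended reading given ``small $\alpha$''), so that $m$ and $n$ are treated as fixed constants and the relative error is genuinely $O(\alpha)$; alternatively, if one wants a uniform statement in $M$, the condition $n\alpha = o(1)$ is sufficient to keep the next-order correction negligible, matching the $\alpha = o(1/M)$ regime already used in Theorem~\ref{thrm:odds}.
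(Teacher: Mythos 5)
Your proposal is correct and follows essentially the same route as the paper: apply the first-order Taylor expansion $1-(1-\alpha)^t=t\alpha+O(\alpha^2)$ to numerator and denominator of the exact CPC ratio, cancel the leading $\alpha$, and substitute $m=\binom{M-1}{\ell-1}$, $n=\binom{M-1}{\ell}$ to get $n/m=(M-\ell)/\ell$. The one thing you add that the paper glosses over is the remark that the implied constant in the $O(\alpha)$ relative error grows with $m,n$, so the approximation is really a fixed-$\ell$ (or $n\alpha=o(1)$) statement rather than a uniform one; that caveat is worth keeping, as it reconciles the corollary with the $\alpha=o(1/M)$ scaling assumed in Theorem~\ref{thrm:odds}.
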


\begin{corollary}[small-$\alpha$ and small-$\beta$ approximation for PC]
For $\ell\ge1$ and small $\alpha,\beta$,
$$
I_{m,n}(1-\beta,\alpha)
\;\approx\;
\frac{1}{m+1}\;+\;\frac{\beta}{m+1}\;-\;\frac{n\,\alpha}{(m+1)(m+2)}
\;+\;O(\alpha^2,\alpha\beta,\beta^2).
$$
Using the ratio expansion $(x+\delta_x)/(y+\delta_y)\approx (x/y)\,[1+(\delta_x/x)-(\delta_y/y)]$, we have
$$
\mathcal R_\ell^{\mathrm{PC}}
\;\approx\;
\frac{n(n+1)}{m(m+1)}\left[
1\;+\;\alpha\;\frac{(m-n)(m+n+1)}{(m+2)(n+2)}
\right]
\;+\;O(\alpha^2,\alpha\beta,\beta^2).
$$
Zeroth order (ignore $\alpha,\beta$) is 
$$
\mathcal R_\ell^{\mathrm{PC}}\approx \frac{n(n+1)}{m(m+1)}
= \frac{M-\ell}{\ell}\cdot \frac{n+1}{m+1}.
$$
and $\frac{n+1}{m+1}$ has no tidy closed form in $M,\ell$. For large $m,n$, 
$\frac{n+1}{m+1}\approx\frac{n}{m}$, giving $\ \mathcal R_\ell^{\mathrm{PC}}\approx \big(\frac{M-\ell}{\ell}\big)^2$.
\end{corollary}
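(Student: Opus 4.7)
The claim reduces to two Taylor computations, which I will carry out in sequence. First I obtain the linear expansion of $I_{m,n}(1-\beta,\alpha)=\int_0^1(1-(1-\beta)u)^m(1-\alpha u)^n\,du$ around $(\alpha,\beta)=(0,0)$. Writing $(1-(1-\beta)u)^m=((1-u)+\beta u)^m=(1-u)^m+m\beta u(1-u)^{m-1}+O(\beta^2)$ and $(1-\alpha u)^n=1-n\alpha u+O(\alpha^2)$, multiplying and discarding $O(\alpha^2,\alpha\beta,\beta^2)$ terms gives an integrand equal to $(1-u)^m+m\beta u(1-u)^{m-1}-n\alpha u(1-u)^m$ up to higher order. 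Integrating termwise using the Beta integrals $\int_0^1(1-u)^m\,du=\frac{1}{m+1}$, $\int_0^1 u(1-u)^{m-1}\,du=\frac{1}{m(m+1)}$, and $\int_0^1 u(1-u)^m\,du=\frac{1}{(m+1)(m+2)}$, the factor $m$ in front of the $\beta$-term cancels the $m(m+1)$ in its denominator, yielding exactly $I_{m,n}(1-\beta,\alpha)=\frac{1}{m+1}+\frac{\beta}{m+1}-\frac{n\alpha}{(m+1)(m+2)}+O(\alpha^2,\alpha\beta,\beta^2)$.

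\textbf{Ratio step.} Next I apply this expansion to $I_{m,n-1}$ (second exponent $n-1$) and to $I_{n,m-1}$ (exponents swapped, then the second decremented). Factoring out the leading constants $1/(m+1)$ and $1/(n+1)$ writes the two integrals as $\frac{1}{m+1}\bigl[1+\beta-\frac{(n-1)\alpha}{m+2}\bigr]$ and $\frac{1}{n+1}\bigl[1+\beta-\frac{(m-1)\alpha}{n+2}\bigr]$ up to $O(\alpha^2,\alpha\beta,\beta^2)$. Taking the ratio and invoking the corollary's stated linearization $(1+a)/(1+b)\approx 1+a-b$ makes the two $\beta$'s cancel, leaving
$$\frac{I_{m,n-1}(1-\beta,\alpha)}{I_{n,m-1}(1-\beta,\alpha)}=\frac{n+1}{m+1}\Bigl[1+\alpha\Bigl(\tfrac{m-1}{n+2}-\tfrac{n-1}{m+2}\Bigr)\Bigr]+O(\alpha^2,\alpha\beta,\beta^2).$$
A short algebraic simplification using $(m-1)(m+2)-(n-1)(n+2)=(m^2+m)-(n^2+n)=(m-n)(m+n+1)$ collapses the bracketed $\alpha$-coefficient to $\frac{(m-n)(m+n+1)}{(m+2)(n+2)}$, and multiplying by the prefactor $n/m$ from the exact formula for $\mathcal{R}_\ell^{\mathrm{PC}}$ (Theorem~\ref{thrm:odds}) yields precisely the claimed expansion.

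\textbf{Zeroth order and main obstacle.} Setting $\alpha=\beta=0$ in the expansion gives $\mathcal{R}_\ell^{\mathrm{PC}}\approx\frac{n(n+1)}{m(m+1)}$. By Lemma~\ref{accuracy:lemma2} combined with the counting in Assumption~\ref{assumption:purity}, $m=\binom{M-1}{\ell-1}$ and $n=\binom{M-1}{\ell}$, so $n/m=(M-\ell)/\ell$ and the zeroth-order expression factors as $\frac{M-\ell}{\ell}\cdot\frac{n+1}{m+1}$. When $m,n$ are both large, $(n+1)/(m+1)\approx n/m=(M-\ell)/\ell$, recovering $((M-\ell)/\ell)^2$. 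The main obstacle is the bookkeeping in the ratio step: one has to track that $I_{m,n-1}$ and $I_{n,m-1}$ produce leading factors $1/(m+1)$ and $1/(n+1)$ (not $1/m$ and $1/n$), so the prefactor that survives is $(n+1)/(m+1)$ rather than $n/m$; and one has to notice that after dividing by the leading term the $\beta$-coefficient is exactly $\beta$ in both factors, which is why it cancels at this order and only the $\alpha$-correction survives.
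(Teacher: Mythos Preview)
Your proposal is correct and follows essentially the same approach as the paper: linearize the integrand in $\alpha,\beta$, integrate termwise via Beta integrals, apply the result to $I_{m,n-1}$ and $I_{n,m-1}$, take the ratio with the $(1+a)/(1+b)\approx 1+a-b$ expansion so the $\beta$ terms cancel, and simplify the surviving $\alpha$-coefficient via $(m-1)(m+2)-(n-1)(n+2)=(m-n)(m+n+1)$. The only cosmetic difference is that you expand $((1-u)+\beta u)^m$ binomially whereas the paper factors out $(1-u)^m$ and expands $(1+\beta u/(1-u))^m$; the two routes are algebraically identical.
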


\begin{proof}
$$
I_{m,n}(1-\beta,\alpha)=\int_0^1\!\bigl(1-(1-\beta)u\bigr)^m\bigl(1-\alpha u\bigr)^n\,du.
$$
Write
\begin{align*}
\bigl(1-(1-\beta)u\bigr)^m &= (1-u)^m\Bigl(1+\frac{\beta u}{1-u}\Bigr)^m \approx (1-u)^m\Bigl(1+m\frac{\beta u}{1-u}\Bigr),\\
(1-\alpha u)^n&\approx 1-n\alpha u,
\end{align*}
keeping terms up to $O(\alpha,\beta)$ and dropping $O(\alpha^2,\alpha\beta,\beta^2)$ then multiplying (and ignoring the $\alpha\beta$ cross-term):
$$
\bigl(1-(1-\beta)u\bigr)^m(1-\alpha u)^n \approx (1-u)^m + m\beta\,u(1-u)^{m-1} - n\alpha\,u(1-u)^m.
$$
Integrate termwise, we have
$$
I_{m,n}(1-\beta,\alpha)\;\approx\;\frac{1}{m+1}\;+\;\frac{\beta}{m+1}\;-\;\frac{n\,\alpha}{(m+1)(m+2)}
\;+\;O(\alpha^2,\alpha\beta,\beta^2),
$$
Since
$$
\mathcal R_\ell^{\mathrm{PC}}=\frac{n}{m}\cdot\frac{I_{m,\,n-1}(1-\beta,\alpha)}{I_{n,\,m-1}(1-\beta,\alpha)}, \quad m=\text{incZ}_\ell,\; n=\text{notZ}_\ell.
$$

Apply the expansion:
\begin{align*}
I_{m,\,n-1}(1-\beta,\alpha)&\approx \frac{1}{m+1}+\frac{\beta}{m+1}-\frac{(n-1)\alpha}{(m+1)(m+2)},\\
I_{n,\,m-1}(1-\beta,\alpha)&\approx \frac{1}{n+1}+\frac{\beta}{n+1}-\frac{(m-1)\alpha}{(n+1)(n+2)}.
\end{align*}

Use the first-order ratio expansion $\frac{x+\delta_x}{y+\delta_y}\approx \frac{x}{y}\Bigl[1+\frac{\delta_x}{x}-\frac{\delta_y}{y}\Bigr]$, where $x=\frac{1}{m+1}$, $y=\frac{1}{n+1}$. As $\frac{\delta_x}{x}=\beta-\frac{(n-1)\alpha}{m+2}$, $\frac{\delta_y}{y}=\beta-\frac{(m-1)\alpha}{n+2}$, the ratio is approximated by $\frac{n+1}{m+1}\left[1+\alpha\left(\frac{m-1}{n+2}-\frac{n-1}{m+2}\right)\right]$.

Therefore
$$
\mathcal R_\ell^{\mathrm{PC}}\approx\frac{n(n+1)}{m(m+1)}\left[1+\alpha\,\frac{(m-n)(m+n+1)}{(m+2)(n+2)}\right]+O(\alpha^2,\alpha\beta,\beta^2).
$$

At zeroth order (ignore $\alpha,\beta$), 
$$
\mathcal R_\ell^{\mathrm{PC}}\approx \frac{n(n+1)}{m(m+1)}
=\Bigl(\frac{n}{m}\Bigr)\Bigl(\frac{n+1}{m+1}\Bigr).
$$
Since $n/m=\binom{M-1}{\ell}/\binom{M-1}{\ell-1}=\frac{M-\ell}{\ell}$,
$$
\mathcal R_\ell^{\mathrm{PC}}
\approx \frac{M-\ell}{\ell}\cdot\frac{n+1}{m+1},
$$

and $\frac{n+1}{m+1}$ has no tidy closed form in $M,\ell$. For large $m,n$, 
$\frac{n+1}{m+1}\approx\frac{n}{m}$, giving $\ \mathcal R_\ell^{\mathrm{PC}}\approx \big(\frac{M-\ell}{\ell}\big)^2$.
\end{proof}

\section{Numerical experiments on non-collider / collider identification FPRs\label{subsec:numerical}}
We set max searching layer as $l = 3$, $\alpha = 0.05$ and $\beta = 0.1$, plugging in the number of nodes, arcs and average degrees of datasets, we can accordingly compute the expected FPRs as Table \ref{tab:expected-fprs}.
\begin{table}[htbp]
\centering
\caption{Expected false positive rates (FPRs) during identifications.}
\label{tab:expected-fprs}
\begin{tabular}{lrrrr}
\toprule
\textbf{network} & \textbf{PC\,(colliders-first)} & \textbf{PC\,(nonc-first)} & \textbf{CPC\,(colliders-first)} & \textbf{CPC\,(nonc-first)} \\
\midrule
asia       & 0.177849 & 0.000926 & 0.071491 & $5.\times10^{-8}$ \\
alarm      & 0.583846 & 0.000159 & 0.128392 & $5.\times10^{-37}$ \\
cancer     & 0.102895 & 0.001906 & 0.059310 & $6.\times10^{-5}$ \\
child      & 0.399529 & 0.000309 & 0.105279 & $5.\times10^{-20}$ \\
hailfinder & 0.700605 & 0.000103 & 0.138733 & $5.\times10^{-56}$ \\
hepar2     & 0.755133 & 0.000082 & 0.141944 & $5.\times10^{-70}$ \\
insurance  & 0.488650 & 0.000222 & 0.117261 & $5.\times10^{-27}$ \\
mildew     & 0.567216 & 0.000168 & 0.126597 & $5.\times10^{-35}$ \\
water      & 0.540173 & 0.000185 & 0.123536 & $5.\times10^{-32}$ \\
win95pts   & 0.773360 & 0.000075 & 0.142753 & $5.\times10^{-76}$ \\
\bottomrule
\end{tabular}
\end{table}

\section{Prompting Templates and Parsing Rule}
\label{app:prompt}

\subsection{Answer Tag Parsing}
We extract the final choice using the following case-insensitive regular expression, which returns a single capital letter in \{A,B,C,D,E\}:
\begin{lstlisting}[language=Python,caption={Regex for parsing the <Answer> tag.}]
_ans_re = re.compile(r"<\s*answer\s*>\s*([ABCDE])\s*<\s*/\s*answer\s*>", re.I)
\end{lstlisting}

\subsection{Prompt templates}
Placeholders in braces are programmatically substituted (e.g., \verb|{u}|, \verb|{v}|, \verb|{data_desc}|).
\begin{lstlisting}[caption={Full chain-of-thought selection template.}]
You are a senior researcher in causal discovery. We are studying the following dataset:

{data_desc}

The two target variables under review are {u} and {v}.

Conditional-independence tests mentioning these variables:

{ci_bullets}

Neighbour chain(s) that must normally remain non-collider:

{chains}

The nodes involved are described as below: 

{node_desc}

Choose one explanation that best fits domain knowledge and/or decides a CI test is unreliable (avoid selecting D or E unless other options are strongly against common sense):

A. Undecided. We don't know enough to confidently pick a directionality.
B. Changing the state of {u} causally affects {v}, and {v} causally affects {u_theOther_2v}.
C. Changing the state of {v} causally affects {u}, and {u} causally affects {v_theOther_2u}.
D. Changing the state of {u} causally affects {v}, and {u_theOther_2v} also causally affects {v}, **violating corresponding CI tests**.
E. Changing the state of {v} causally affects {u}, and {v_theOther_2u} also causally affects {u}, **violating corresponding CI tests**.

Think step-by-step before selecting:
1. Mechanisms - What known causal pathways (biological, physical, etc.) support each direction?
2. Counterfactual test - What would happen if we intervened on one node? What would we expect?
3. Empirical check - Point to one key piece of information that favors/weakens a direction.
4. Comparison - Briefly weigh A vs B vs C vs D vs E and choose the most plausible.

Return exactly three lines:
1. Reasoning in support of one direction.
2. Reasoning against the weaker/less plausible direction.
3. Final choice:  <Answer>A/B/C/D/E</Answer> 
\end{lstlisting}

\subsection{Template When Only \texorpdfstring{$v \to u$}{v->u} Ancillary Edge Is Possible (\texttt{\_CHAIN\_PROMPT\_TMPL\_None2u})}
\begin{lstlisting}[caption={Restricted template (None2u).}]
You are a senior researcher in causal discovery. We are studying the following dataset:

{data_desc}

The two target variables under review are {u} and {v}.

Conditional-independence tests mentioning these variables:

{ci_bullets}

Neighbour chain(s) that must normally remain non-collider:

{chains}

The nodes involved are described as below: 

{node_desc}

Choose one explanation that best fits domain knowledge and/or decides a CI test is unreliable (avoid selecting D unless other options are strongly against common sense):

A. Undecided. We don't know enough to confidently pick a directionality.
B. Changing the state of {u} causally affects {v}, and {v} causally affects {u_theOther_2v}.
C. Changing the state of {v} causally affects {u}.
D. Changing the state of {u} causally affects {v}, and {u_theOther_2v} also causally affects {v}, **violating corresponding CI tests**.

Think step-by-step before selecting:
1. Mechanisms - What known causal pathways (biological, physical, etc.) support each direction?
2. Counterfactual test - What would happen if we intervened on one node? What would we expect?
3. Empirical check - Point to one key piece of information that favors/weakens a direction.
4. Comparison - Briefly weigh A vs B vs C vs D and choose the most plausible.

Return exactly three lines:
1. Reasoning in support of one direction.
2. Reasoning against the weaker/less plausible direction.
3. Final choice:  <Answer>A/B/C/D</Answer> 
\end{lstlisting}

\subsection{Template When Only \texorpdfstring{$u \to v$}{u->v} Ancillary Edge Is Possible (\texttt{\_CHAIN\_PROMPT\_TMPL\_None2v})}
\begin{lstlisting}[caption={Restricted template (None2v).}]
You are a senior researcher in causal discovery. We are studying the following dataset:

{data_desc}

The two target variables under review are {u} and {v}.

Conditional-independence tests mentioning these variables:

{ci_bullets}

Neighbour chain(s) that must normally remain non-collider:

{chains}

The nodes involved are described as below: 

{node_desc}

Choose one explanation that best fits domain knowledge and/or decides a CI test is unreliable (avoid selecting D unless other options are strongly against common sense):

A. Undecided. We don't know enough to confidently pick a directionality.
B. Changing the state of {u} causally affects {v}.
C. Changing the state of {v} causally affects {u}, and {u} causally affects {v_theOther_2u}.
D. Changing the state of {v} causally affects {u}, and {v_theOther_2u} also causally affects {u}, **violating corresponding CI tests**.

Think step-by-step before selecting:
1. Mechanisms - What known causal pathways (biological, physical, etc.) support each direction?
2. Counterfactual test - What would happen if we intervened on one node? What would we expect?
3. Empirical check - Point to one key piece of information that favors/weakens a direction.
4. Comparison - Briefly weigh A vs B vs C vs D and choose the most plausible.

Return exactly three lines:
1. Reasoning in support of one direction.
2. Reasoning against the weaker/less plausible direction.
3. Final choice:  <Answer>A/B/C/D</Answer> 
\end{lstlisting}

\subsection{Template Without CI/Neighbour Context (\texttt{\_CHAIN\_PROMPT\_TMPL\_None})}
\begin{lstlisting}[caption={Minimal template (None).}]
You are a senior researcher in causal discovery. We are studying the following dataset:

{data_desc}

The two target variables under review are {u} and {v}.

The nodes involved are described as below: 

{node_desc}

Choose one explanation that best fits domain knowledge:

A. Undecided. We don't know enough to confidently pick a directionality.
B. Changing the state of {u} causally affects {v}.
C. Changing the state of {v} causally affects {u}.

Think step-by-step before selecting:
1. Mechanisms - What known causal pathways (biological, physical, etc.) support each direction?
2. Counterfactual test - What would happen if we intervened on one node? What would we expect?
3. Empirical check - Point to one key piece of information that favors/weakens a direction.
4. Comparison - Briefly weigh A vs B vs C and choose the most plausible.

Return exactly three lines:
1. Reasoning in support of one direction.
2. Reasoning against the weaker/less plausible direction.
3. Final choice:  <Answer>A/B/C</Answer> 
\end{lstlisting}
\end{document}